\theoremstyle{plain}
\newtheorem{theorem}{Theorem}
\newtheorem{lemma}{Lemma}
\newtheorem{proposition}{Proposition}
\theoremstyle{definition}
\newtheorem{dfn}{Definition}
\newtheorem*{remark}{Remark}
\newtheorem{myexp}{Example}
\newenvironment{myexpcont}
{\addtocounter{myexp}{-1}
\begin{myexp}[cont'd]}
  {\end{myexp}}
\algnewcommand{\IIf}[1]{\State\algorithmicif\ #1\ \algorithmicthen}
\algnewcommand{\EndIIf}{\unskip\ \algorithmicend\ \algorithmicif}
\newcommand{\define}{\ensuremath{\triangleq}}
\newcommand{\red}[1]{\textcolor{red}{#1}}
\newcommand{\M}{\mathcal{M}}
\newcommand{\dfncy}{{\sf{dfncy}}}
\newcommand{\cost}[1]{{\sf{cost}}(#1)}
\newcommand{\val}{\sf{val}}
\newcommand{\pd}{{\sf{pDCMP}}}
\newcommand{\qd}{{\sf{qDCMP}}}
\journal{arXiv}
\begin{document}

\begin{frontmatter}
\title{A Fast Algorithm for Computing the  Deficiency Number of a Mahjong Hand}

\author[inst1]{Xueqing Yan}
\affiliation[inst1]{organization={School of Computer Science},
            addressline={Shaanxi Normal University}, 
            city={Xi'an},
            country={China}}

\author[inst1]{Yongming Li}
\author[inst2]{Sanjiang Li}
\affiliation[inst2]{organization={Centre for Quantum Software and Information},
            addressline={University of Technology Sydney}, 
            city={Sydney},
            country={Australia}}
            
%
\begin{abstract}
The tile-based multiplayer game Mahjong is widely played in Asia and has also become increasingly popular worldwide. Face-to-face or online, each player begins with a hand of 13 tiles and players draw and discard tiles in turn until they complete a winning hand. An important notion in Mahjong is the \emph{deficiency} number (a.k.a. \emph{shanten} number in Japanese Mahjong)  of a hand, which estimates how many tile changes are necessary to complete the hand into a winning hand. The deficiency number plays an essential role in major decision-making tasks such as selecting a tile to discard. This paper proposes a fast algorithm for computing the deficiency number of a Mahjong hand. Compared with the baseline algorithm, the new algorithm is usually 100 times faster and, more importantly, respects the agent's knowledge about available tiles. The algorithm can be used as a basic procedure in all Mahjong variants by both rule-based and machine learning-based Mahjong AI. 
\end{abstract}


\begin{keyword}
Mahjong \sep deficiency \sep shanten number \sep decomposition \sep block 
\end{keyword}

\end{frontmatter}

\section{Introduction}

Games have played as the test-beds of many novel artificial intelligence (AI) techniques and ideas since the very beginning of AI research. In the past decades, we have seen AI programs that can beat best human players in perfect information games including checker \cite{Samuel59}, chess \cite{shannon1988programming} and Go \cite{Silver+16}, where players know everything occurred in the game before making a decision.  More recently, important progress has also been made in solving the more challenging imperfect information games such as the two-player heads-up  Texas hold'em poker \cite{bowling2015heads,BrownS17}, DouDiZhu \cite{Jiang19_deltadou}, and Mahjong 
\cite{li2020suphx}.



In this paper, we study the imperfect information game Mahjong, which is a widely played multiplayer game. Compared with Go and poker, Mahjong is perhaps more popular and more complicated. The game is played with a set of 144 tiles based on Chinese characters and symbols (see Figure~\ref{fig:mjtiles})  and has many variants in tiles used, rules, and scoring systems \cite{Wiki_mahjong}. In the beginning of the game, each player is given with a hand of 13 tiles and, in turn, players draw and discard tiles until they complete a winning hand. 

Most early researches focus on developing AI programs for Japanese Mahjong and are often reported in Japanese. Mizukami and Tsuruoka \cite{MizukamiT15} 
build a strong Mahjong AI based on Monte Carlo simulation and opponent models. Yoshimura et al. \cite{Yoshimura_optmov} propose a tabu search method of optimal movements without using game records. More recently, Kurita and Hoki \cite{Kurita_methods} represent the game of Mahjong as multiple Markov Decision Processes and the thus constructed Mahjong AI was evaluated to be competitive with two strong AI players.  Suphx, the Mahjong AI from Microsoft Research Asia \cite{li2020suphx}, is by far the strongest AI for Japanese Mahjong. Based on deep reinforcement learning, Suphx has demonstrated super-human performance in the Tenhou platform (https://tenhou.net/). 

There are also some very recent researches on Chinese Mahjong. In \cite{WangYLH19}, Wang et al. propose a deep residual network-based strategy that is compared favourably with three high-level human players.
Gao and Li \cite{Gao-bloody} extract Mahjong features with the deep learning model and then derive their strategy by classifying the learned features. More progresses are also reported in the IJCAI 2020 Mahjong AI Competition.\footnote{ \url{https://botzone.org.cn/static/gamecontest2020a.html}}  

While these achievements are remarkable, they (except \cite{Yoshimura_optmov}) often demand high volume game records of human expert players. In addition, like general deep learning algorithms, most of these Mahjong AIs have the limitation of poor interpretability. In a previous paper \cite{li2019lets}, we initiated a mathematical and knowledge-based AI study of Mahjong, aiming to explore ways of integrating traditional knowledge-based AI with machine learning. As a combinatorial game, winning hands of Mahjong have structures that can be exploited and specified as local constraints. Moreover, the agent's belief or knowledge about the distribution of non-shown tiles can be updated as knowledge/belief revision. We expect that knowledge reasoning methods, e.g., Bayesian inference, constraint solving and optimisation, belief revision, can be exploited in the design of strong Mahjong AI. It is also expected that these knowledge-based methods can be further combined with machine learning methods proposed in \cite{bowling2015heads,BrownS17,li2020suphx}. 

To build a strong Mahjong AI, one basic procedure is to determine how many tile changes we need to complete the hand. This notion is called the deficiency number of the hand in \cite{li2019lets}, which is exactly the \emph{shanten} number (or simply shanten) in Japanese Mahjong. Deficiency calculation is a basic procedure that can be used in all Mahjong variants by both rule-based and learning-based Mahjong AIs (see, e.g., \cite{Yoshimura_optmov,Kurita_methods}). While algorithms for calculating deficiency or shanten exist (see Tenhou's website for a shanten calculator and the quadtree method of \cite{li2019lets}), these algorithms do not take the agent's knowledge about available tiles into consideration, which makes their calculated numbers less useful.    

The aim of this paper is to provide an efficient and general purpose algorithm for calculating the deficiency (shanten) number which respects the agent's knowledge of available tiles. Recall that a hand is a winning hand if it can be decomposed into four melds (i.e., triples or three consecutive tiles) plus one pair.\footnote{There are possibly other kinds of winning hands in different Mahjong variants, but this is the most typical one.} We call any such combination a \emph{decomposition}. As in the quadtree method \cite{li2019lets}, we represent the agent's knowledge of available tiles as a $k$-tuple of integers less than or equal to 4, where $k$ is the number of different tiles used in the Mahjong variant. Furthermore, we divide the hand into blocks of tiles, calculate their quasi-decompositions and map them into a small set of types, and then merge these local types and evaluate their costs, where the precise meaning of a quasi-decomposition will become clear in Definition~\ref{dfn:qdcmp}. For the time being, we regard it as an incomplete version of decomposition. Meanwhile, the type of a quasi-decomposition is a 7-tuple which consists of values of seven attributes of the  quasi-decomposition.

The remainder of this paper is structured as follows. After a short introduction of basic notions of Mahjong in Sec.~\ref{sec:background}, we recall in Sec.~\ref{sec:quadtree} the quadtree method for calculating deficiency introduced in \cite{li2019lets} and point out its limitations. The revised knowledge-aware definition of deficiency is introduced in Sec.~\ref{sec:kb-deficiency}, where we also introduce seven attributes of quasi-decompositions. Furthermore, we show that the revised deficiency can be obtained by calculating the minimal cost of all quasi-decompositions and the cost of a quasi-decomposition can be determined, in most time, by its attributes. Our block-based algorithm for calculating deficiency is described in Sec.~\ref{sec:block_dfncy}, where we show that if the deficiency is not greater than four, then our algorithm can return the number exactly. Experiments in Sec.~\ref{sec:experiment} demonstrate that the block-based algorithm is efficient and mostly exact. The last section concludes the paper with outlook on future research directions.  Technical proofs can be found in the appendix.
\section{Backgrounds}\label{sec:background}
In this section, we recall from \cite{li2019lets} some basic notions of Mahjong. For simplicity, we only consider a very basic and essential version of Mahjong. The other variants can be dealt with analogously. There are three types of tiles used in the basic version of Mahjong:
\begin{itemize}
    \item Bamboos: $B1, B2, ..., B9$, each with four identical tiles
    \item Characters: $C1, C2, ..., C9$, each with four identical tiles
    \item Dots: $D1, D2, ..., D9$, each with four identical tiles
\end{itemize}
We call \emph{Bamboo} ($B$), Character ($C$), Dot ($D$) \emph{colours} of the tiles, and write $\M_{0}$ for the set of these tiles, which includes in total 108 tiles. 

\begin{figure}[tbp]
    \centering
    \includegraphics[width=0.8\textwidth]{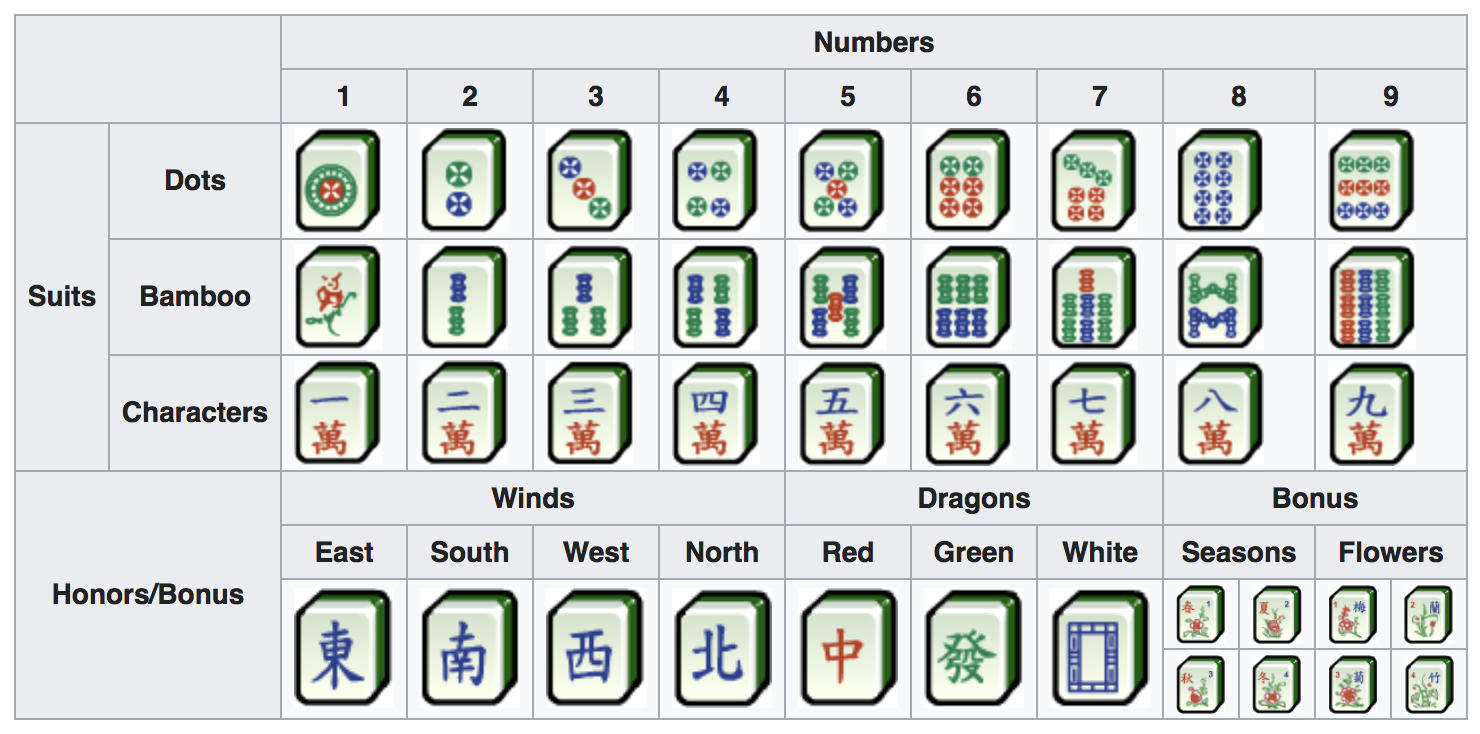}
    \caption{Mahjong tiles, from https://en.wikipedia.org/wiki/Mahjong}\label{fig:mjtiles}
    \label{fig:my_label}
\end{figure}

\begin{dfn}[pong, chow, meld]
A \emph{pong} is a sequence of three identical tiles. A \emph{chow} is a sequence of three consecutive tiles of the same colour. A \emph{meld} is either a pong or a chow. 
\end{dfn}

The following non-standard notions are also useful.
\begin{dfn} [pchow, pmeld \cite{li2019lets}] \label{dfn:pmeld}
A \emph{prechow} {\rm(}\emph{pchow}{\rm)} is a pair of tiles of the same colour s.t. it becomes a chow if we add an appropriate tile with the same colour. A \emph{premeld} {\rm(}\emph{pmeld}{\rm)} is a pchow or a pair.  We say a tile $c$ \emph{completes} a pmeld $(ab)$ if $(abc)$ {\rm(}after reordering{\rm)} is a meld. Similarly, a pair is completed from a single tile $t$ if it is obtained by adding an identical tile to $t$.
\end{dfn}

For example, $(B3B4B5)$ is a chow, $(C1C1)$ is a pair, $(B7B7B7)$ is a pong, and $(B1B3)$ and $(C2C3)$ are two pchows.


In a Mahjong game, each player begins with a set of 13 tiles (called her \emph{hand}). Players may change a tile by either drawing a new tile or robbing a tile, say $t$, discarded by another player if the player has two identical tiles $t$ and wants to consolidate a pong $(ttt)$ or if $t$ completes her 13-tile into a winning hand. 
\begin{dfn}[hand]
\label{dfn:hand}
A $k$-tile is a sequence $S$ of $k$ tiles from $\M_0$, where no identical tiles can appear more than four times. A hand is a 13- or  14-tile.
\end{dfn}
To be precise, we write a tile $t$ as a pair $(c,n)$ s.t. $c \in \{0,1,2\}$ denotes the colour of the tile $t$ and $n$ denotes the number of $t$. Here $0, 1,2$ denote, respectively, Bamboo, Character, and Dot. For example, $(0,3)$ and $(1,5)$ denote $B3$ and $C5$ respectively. In addition, we assume that each $k$-tile $S$ is alphabetically ordered, e.g., $(0,3)$ is always before $(1,5)$ if  both appear in $S$.

Suppose $S$ is a sequence of $k$ tiles. Write $S[i]$ for the $(i+1)$-th value of $S$ for $0 \leq i \leq k-1$, i.e., we assume, as in Python and many other programming languages, that $S[0]$ represents the first entity of $S$.

\begin{dfn}[winning or complete hand, decomposition, eye \cite{li2019lets}]
\label{dfn:complete&decomposition}
A 14-tile over $\M_0$ is \emph{winning} or \emph{complete} if it can be decomposed into four melds and one pair.\footnote{For ease of presentation, we don't regard  a hand with seven pairs as complete.} Given a complete 14-tile $H$, a \emph{decomposition} $\pi$ of $H$ is a sequence of five subsequences of $H$ s.t. 
$\pi[4]$ is a pair and, for $0\leq i\leq 3$, each $\pi[i]$ is a meld. If this is the case, we call $\pi[4]$ the \emph{eye} of the decomposition.
\end{dfn}
The following 14-tile $H$ is complete as it has a decomposition $\pi$.
\begin{align}\label{mjsol1}
H &= (B1B2B2B3B3B4B7B7B7)(C1C1)(D4D5D6)\\ \nonumber
\pi &= (B1B2B3)(B2B3B4)(B7B7B7)(D4D5D6)(C1C1).
\end{align}

\begin{dfn} [deficiency \cite{li2019lets}]
\label{dfn:dfncy}
The \emph{deficiency number} \rm{(}or simply \emph{deficiency}\rm{)} of a 14-tile $H$ is defined recursively:
\begin{itemize}
    \item $H$ has deficiency 0 if it is complete;
    \item In general, for $\ell\geq 0$, $H$ has deficiency $\ell+1$ if it has no deficiency smaller than or equal to $\ell$ and there exists a tile $t$ in $H$ and another tile $t'$ s.t. $H[t/t']$  (the 14-tile obtained from $H$ by replacing $t$ with $t'$) has deficiency $\ell$.
\end{itemize}
If the deficiency of $H$ is $\ell$, we write $\dfncy(H) = \ell$.
\end{dfn}
The above definition can also been extended to 13-tiles: A 13-tile $H$ has deficiency 1 if there exists a tile $t$ s.t. $H$ becomes complete after adding $t$. In this case, $H$ is called a ready hand or  \emph{tenpai} in Japanese Mahjong.  In general, a 13-tile $H$ has deficiency $\ell+1$ for $\ell>1$ if it does not have deficiency smaller than or equal to $\ell$ and it has deficiency $\ell$ after some proper tile change.

A 14-tile $H$ is often not complete. The deficiency number of $H$ measures how bad $H$ is by counting the number of necessary tile changes to make it complete. 

\begin{proposition}
\label{prop:dfncy<=6}
The deficiency of any 14-tile $H$ is not greater than 6.
\end{proposition}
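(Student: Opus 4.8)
The plan is to recast the statement as a retention (edit-distance) bound. Directly from Definition~\ref{dfn:dfncy}, $\dfncy(H)$ is the least number of single-tile replacements that turn $H$ into a complete hand; equivalently, writing $s(H,H^*)$ for the number of tiles that $H$ and a complete hand $H^*$ share as multisets, we have $\dfncy(H)=14-\max_{H^*}s(H,H^*)$, since the $14-s$ mismatched tiles can be relabelled one at a time into the target (ordering the replacements so that no tile ever exceeds four copies). Hence it suffices to exhibit, for every $14$-tile $H$, a complete hand that agrees with $H$ on at least $8$ tiles. First I would set up the bookkeeping for building such an $H^{*}$ slot by slot. A complete hand has five monochromatic slots: four meld slots and one eye. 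If a pchow or a pair of $H$ is placed inside a meld slot, it is completed to a chow or pong by one fresh tile, so it retains $2$ tiles at the cost of one replacement; a pair placed in the eye retains $2$ tiles for free; a lone tile placed in any slot retains only $1$ tile.

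Next I would reduce everything to finding disjoint pmelds. Suppose $H$ contains three pairwise-disjoint pmelds. If at least one of them is a pair, assign it to the eye (cost $0$, retains $2$), fill two further meld slots with the remaining two pmelds (cost $2$, retains $4$), and fill the last two meld slots with single tiles (cost $4$, retains $2$); the total retained is $8$. If all three pmelds are pchows, place them in three meld slots (cost $3$, retains $6$), fill the fourth meld slot with a lone tile (cost $2$, retains $1$), and build the eye by duplicating one further leftover tile (cost $1$, retains $1$); again $8$ tiles are retained. In both cases $\dfncy(H)\le 6$. So the whole proposition reduces to the purely combinatorial claim that every $14$-tile contains at least three pairwise-disjoint pmelds.

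To prove that claim I would argue colour by colour, using that melds, pchows and pairs are all monochromatic. Within a single colour, a set of tiles containing no pmeld must consist of distinct values that are pairwise at least three apart (otherwise two of them form a pchow) with no repeated value (otherwise two form a pair), so it has at most three elements, e.g.\ a pattern such as $B1,B4,B7$. Consequently any colour carrying enough tiles is forced to contain disjoint pmelds, and since the three colour-blocks together hold $14$ tiles, aggregating the per-colour matchings should yield at least three disjoint pmelds overall. The hard part will be making this aggregation quantitative: I expect the main obstacle to be proving a sharp per-colour lower bound on the maximum pmeld-matching (as a function of the number of tiles of that colour, respecting the cap of four copies) and then checking that these bounds sum to at least three across every colour-size profile $(n_B,n_C,n_D)$ with $n_B+n_C+n_D=14$. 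A naive positional split of the sorted hand into $3+3+3+3+2$ only gives the weak bound $\dfncy(H)\le 9$ (witnessed by repeated $B1,B4,B7$ blocks), so the argument must genuinely exploit the freedom to regroup tiles across slot boundaries and to trade chows against pongs; a secondary, routine obstacle is the validity bookkeeping that the fresh completing tiles never push any tile past four copies.
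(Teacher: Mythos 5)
The paper itself states Proposition~\ref{prop:dfncy<=6} without proof (it is imported from \cite{li2019lets} and no argument for it appears in the appendix), so your proposal can only be judged on its own merits. Your overall route --- recast deficiency as $14$ minus the maximal multiset overlap with a complete hand, then exhibit a complete hand retaining $8$ tiles by finding three pairwise disjoint pmelds --- is sound, and the slot-by-slot accounting in both of your cases ($2+2+2+1+1$ retained) is correct. Moreover, the step you flag as ``the hard part'' is in fact easy: a monochromatic multiset with no pmeld has all values distinct and pairwise at least $3$ apart, hence at most $3$ tiles; greedily extracting pmelds colour by colour therefore yields $\mu_c \geq (n_c-3)/2$ disjoint pmelds in a colour with $n_c$ tiles, so $\sum_c \mu_c \geq (14-9)/2 = 5/2$, and integrality gives at least $3$ disjoint pmelds --- no delicate case check over profiles $(n_B,n_C,n_D)$ is needed. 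Your worry about needing a ``sharp'' per-colour bound is an overestimate of the difficulty.

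The genuine soft spot is the part you dismiss as ``routine bookkeeping.'' A pair placed in a meld slot can \emph{only} be completed to a pong, so if the three disjoint pmelds you selected include two pairs of the same tile $t$ (possible when $H$ holds four copies of $t$), the eye $(tt)$ plus a pong $(ttt)$ would put five copies of $t$ into $H^*$, which is illegal; similarly a pchow such as $(t,t+1)$ with $t=8$ admits only one completing tile, so completions are not always freely re-routable. These obstructions are avoidable --- when $H$ has four copies of $t$ you can retain the pong $(ttt)$ outright, or replace one offending pair by a pchow through $t$, and the matching lower bound leaves enough slack to re-choose the pmelds --- but avoiding them is a concrete lemma you must state and prove, not a footnote: your identity $\dfncy(H)=14-\max_{H^*}s(H,H^*)$ quantifies over \emph{legal} complete hands, so without this step the inequality $\dfncy(H)\leq 6$ does not yet follow from the existence of three disjoint pmelds. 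With that lemma added (and with the aggregation argument written out as above), your proof goes through.
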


For example, the following hand has deficiency 6.
\begin{align*}
H &=(B3B5B6B9)(C2C2C3C6C9)(D1D1D2D5D8)
\end{align*}

It is one basic requirement for a Mahjong AI  to tell how bad a hand is when playing Mahjong. To tell if her hand $H$ is winning, the agent only needs to check if $\dfncy(H)=0$. In general, after drawing a tile from the Wall, the agent needs to select a tile to discard if it does not complete her hand. This can be done by comparing, for each tile $t$ in $H$, if she has the maximum chance of decreasing the deficiency after discarding $t$ and drawing a tile from the Wall. Analogously, suppose another player discards a tile $t$ and the agent has a pair $(tt)$ in her hand. She needs to decide if she performs a pong of $t$. She may adopt a strategy in which she performs the pong only when the action helps to reduce (or does not increase) the hand deficiency.  

\section{The Quadtree Algorithm for Deficiency Computing} \label{sec:quadtree}

In this section we describe the quadtree algorithm for deficiency computing introduced in \cite{li2019lets}, which is based on calculating and comparing the costs of all pre-decompositions of a 14-tile $H$. 

In this paper, we regard any sequence of tiles as a multiset, allowing for multiple instances for each of its elements. The usual set operations $\cup,\cap,\setminus$ are also defined for multisets. For clarity, we also write a multiset, e.g., $\{B5,B5,B6,C8,C9\}$, as ordered tuples like $(B5B5B6)(C8C9)$.

\subsection{Pre-decomposition}

Given a 14-tile $H$, we can determine if $\dfncy(H)\leq k$ by recursively checking if it has a neighbour $H[t/t']$ with deficiency $<\!k$, where $H[t/t']$ denotes the hand obtained by replacing a tile $t$ in $H$ by another tile $t'$.
This is extremely inefficient. The more practical quadtree method is  based on the notion of pre-decomposition.
\begin{dfn}[pre-decomposition and its remainder \cite{li2019lets}]
A \emph{pre-decomposition} {\rm(}\emph{{\pd}}{\rm)} is a sequence $\pi$ of five sequences, $\pi[0],...,\pi[4]$, s.t. 
\begin{itemize}
    \item $\pi[4]$ is a pair, a single tile, or empty;
    \item for $0\leq i\leq 3$, each $\pi[i]$ is a meld, a pmeld, a single tile, or empty.
\end{itemize}
We call each $\pi[i]$ for $0\leq i\leq 3$ a meld holder and call $\pi[4]$ the eye holder.
We say $\pi$ is a complete {\pd} if it is a decomposition. If $\bigcup_{i=0}^4 \pi[i]$ is contained in a 14-tile $H$, we say $\pi$ is a $\pd$ of $H$ and call the set of tiles in $H$ that are not in $\bigcup_{i=0}^4 \pi[i]$ the \emph{remainder} of $H$ under $\pi$.  
\end{dfn}
A decomposition of a complete hand $H$ is also a {\pd} of $H$.
\begin{dfn}[completion and cost \cite{li2019lets}]
\label{dfn: dcmp_cost}
Suppose $\pi$ and $\pi'$ are two {\pd}s. We say $\pi'$ is \emph{finer} than $\pi$ if, for each $0\leq i\leq 4$, $\pi[i]$ is identical to or a subsequence of $\pi'[i]$. A {\pd} $\pi$ is \emph{completable} if there exists a decomposition $\pi^*$ that is finer than $\pi$. If this is the case, we call $\pi^*$ a \emph{completion} of $\pi$.  

The \emph{cost} of a completable {\pd} $\pi$, written $\cost{\pi}$,  is the number of missing tiles required to complete $\pi[4]$ into a pair and complete each $\pi[i]$ into a meld for $0\leq i\leq 3$. If $\pi$ is incompletable, we say it has infinite cost.
\end{dfn}

A 14-tile $H$ may have many different {\pd}s.

\begin{myexp}
\label{ex:3pd}
Consider the 14-tile 
\begin{align*}
H &=(B1B5B6B8B8B8B9)(\cdot)(D1D2D4D5D5D6D7)
\end{align*}
It is easy to check that the following are all its {\pd}s 
\begin{align*}
\pi_0 &= (B5B6)(B8B8)(D4D5D6)(D5D7)(B8)\\
\pi_1 &= (B5B6)(B8B8B8)(D4D5D6)(D5D7)(B1) \\
\pi_2 &= (B5B6)(B8B8B8)(D4D5)(D5D6D7)(B1).
\end{align*}
The meld holder $(B8B8)$ and the eye holder $(B8)$ in $\pi_0$ cannot be both completed, as there are only four $B8$ in Mahjong. The {\pd}s $\pi_1$ and $\pi_2$ have, respectively, the following completions
\begin{align*}
\pi_1^* &= (\underline{B4}B5B6)(B8B8B8)(D4D5D6)(D5\underline{D6}D7)(B1\underline{B1})\\
\pi_2^* &=  (\underline{B4}B5B6)(B8B8B8)(D4D5\underline{D6})(D5D6D7)(B1\underline{B1}).
\end{align*}
We have $\cost{\pi_1}=\cost{\pi_2}=3$. It is easy to check that  $\dfncy(T)=3$. Therefore, both $\pi_1$ and $\pi_2$ are  {\pd}s with the minimal cost. 
\end{myexp}
\begin{theorem}[\cite{li2019lets}] \label{thm:m-cost}
The deficiency of a 14-tile $H$ is the minimum cost of its {\pd}s. We say a {\pd} $\pi$ of a 14-tile $H$ is \emph{minimal} if $\cost{\pi} = \dfncy(T)$.
\end{theorem}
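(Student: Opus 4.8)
The plan is to establish the equality by proving the two inequalities $\dfncy(H)\le\min_\pi\cost{\pi}$ and $\dfncy(H)\ge\min_\pi\cost{\pi}$ separately, where $\pi$ ranges over the completable {\pd}s of $H$. The common bridge is a counting fact together with a ``distance one'' principle for tile changes. First I would record that $\cost{\pi}$ equals the size of the remainder of $H$ under $\pi$: a completion $\pi^*$ is a decomposition, hence $\bigcup_i\pi^*[i]$ has exactly $14$ tiles, and it agrees with $\bigcup_i\pi[i]$ except on the missing tiles, so $\cost{\pi}=14-\lvert\bigcup_i\pi[i]\rvert=\lvert\text{remainder}\rvert$. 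Second, a single change $H\mapsto H[t/t']$ removes one tile and adds one, so it moves the multiset $H$ by at most ``distance one'', and the recursion in Definition~\ref{dfn:dfncy} gives $\dfncy(H_j)\le\dfncy(H_{j+1})+1$ along any chain of changes.

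For the upper bound I fix any completable {\pd} $\pi$ with completion $\pi^*$ and set $G=\bigcup_i\pi^*[i]$, a complete $14$-tile. Since $\bigcup_i\pi[i]$ is a common sub-multiset of $H$ and $G$, the multisets differ in at most $c=\cost{\pi}$ tiles, i.e. $\lvert G\setminus H\rvert=\lvert H\setminus G\rvert\le c$. I then transform $H$ into $G$ by $\lvert G\setminus H\rvert$ one-tile replacements, each swapping a tile of $H\setminus G$ for a tile of $G\setminus H$. Telescoping the recursion and using $\dfncy(G)=0$ yields $\dfncy(H)\le\lvert G\setminus H\rvert\le c$, and taking the minimum over $\pi$ gives $\dfncy(H)\le\min_\pi\cost{\pi}$.

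For the lower bound I let $d=\dfncy(H)$ and take a shortest chain $H=H_0,\dots,H_d=G$ of tile changes ending in a complete hand $G$ with decomposition $\pi^*$. Because each step alters a single tile, the multiset distance satisfies $\lvert G\setminus H\rvert\le d$. I build a {\pd} $\pi$ of $H$ by \emph{restricting} each block $\pi^*[i]$ to the tiles already present in $H$, formally keeping, for each tile type, the minimum of its multiplicities in $H$ and in $G$. Then $\bigcup_i\pi[i]\subseteq H$, and $\pi^*$ is finer than $\pi$, so $\pi$ is completable with remainder of size $\lvert G\setminus H\rvert$; hence $\cost{\pi}=\lvert G\setminus H\rvert\le d$, giving $\min_\pi\cost{\pi}\le d=\dfncy(H)$.

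Two points will require care. The first, more routine, is that every restricted block is a legal holder in the sense of the definition of {\pd}: a two-tile subsequence of a chow is always a pchow (deleting an end leaves two consecutive tiles, deleting the middle leaves a distance-two pair), a subsequence of a pong is a pair or a single, and a subsequence of the eye is a pair, single, or empty, so each case is admissible. The genuine obstacle is the intermediate validity in the upper bound: I must order the $c$ replacements so that no tile ever occurs more than four times, since Definition~\ref{dfn:dfncy} only permits steps through valid $14$-tiles. I would argue this is automatic, because the tile types occurring in $H\setminus G$ and in $G\setminus H$ are disjoint; thus adding copies of a type only drives its count monotonically upward toward its count in $G$, which is at most four, while removals never create an overflow, so every intermediate hand is a valid $14$-tile regardless of the order chosen.
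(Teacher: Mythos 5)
This theorem is stated here as imported from \cite{li2019lets}; the present paper contains no proof of it (the appendix only proves Lemmas~\ref{lem:incompletable}--\ref{lem:cost3}), so there is nothing in-document to compare against line by line. Judged on its own, your argument is correct and self-contained, and it is the natural proof: the identity $\cost{\pi}=14-\lvert\bigcup_i\pi[i]\rvert=\lvert\text{remainder}\rvert$ holds because a decomposition uses exactly $4\cdot 3+2=14$ tiles; the upper bound via $\lvert G\setminus H\rvert\le\cost{\pi}$ (since $G\setminus H\subseteq G\setminus\bigcup_i\pi[i]$) and telescoping $\dfncy(H_j)\le\dfncy(H_{j+1})+1$ is sound; and the lower bound correctly converts a shortest chain into a completable {\pd} by restricting the blocks of the terminal decomposition $\pi^*$, with $\pi^*$ finer than the restriction, giving $\cost{\pi}=\lvert H\setminus G\rvert\le d$.

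You also identified and resolved the two points where a careless version would fail. The case analysis for legality of restricted holders is right: any two-tile subsequence of a chow is a pchow under Definition~\ref{dfn:pmeld} (including the distance-two pair left by deleting the middle tile, e.g.\ $(B1B3)$), subsequences of pongs are pairs or singles, and the eye restricts to a pair, single, or empty, all admissible in a {\pd}. The intermediate-validity argument is also correct and worth keeping explicit: since a tile type $t$ satisfies $H(t)>G(t)$ or $G(t)>H(t)$ but never both, the supports of $H\setminus G$ and $G\setminus H$ are disjoint, each type's count moves monotonically from $H(t)$ to $G(t)$, and both endpoints are at most $4$, so every intermediate hand is a legal 14-tile in any order of replacements --- this matters precisely because of the constraint illustrated by $\pi_0$ in Example~\ref{ex:3pd} (only four copies of $B8$ exist). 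One cosmetic note: the theorem's statement writes $\dfncy(T)$ where $\dfncy(H)$ is meant; your proof correctly works with $H$ throughout.
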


 We next present the deficiency calculation method introduced in \cite{li2019lets}.

 \subsection{The quadtree method}

The quadtree method determines the deficiency of a 14-tile $H$ by constructing  and evaluating its possible {\pd}s in an exhaustive way. In the quadtree, each node is represented by a word $\alpha$ in the alphabet $\Sigma=\{1,2,3,4\}$, where each symbol represents a possible action. Each node $\alpha$ is attached with a {\pd} $\pi_\alpha$ of $H$ and a subsequence $S_\alpha$ of $H$, which denotes the set of tiles remaining to be processed. As will become clear,  two words $\alpha,\beta$ are identical iff $(\pi_\alpha,S_\alpha) = (\pi_\beta,S_\beta)$. In what follows, we do not distinguish between a node $\alpha$ and its associated $(\pi_\alpha,S_\alpha)$. In the construction process, when $S_\alpha$ becomes empty,
we terminate the branch with an exact cost estimation. Note that it is possible that $\pi_\alpha$ may still have empty placeholders and can be refined by recycling tiles from the remainder of $\pi_\alpha$. For each empty placeholder, we decrease the cost by 1. This estimation is exact except a few exceptions that are treated specially (see \cite{li2019lets}).   


An illustration of the search method is given in Fig.~\ref{fig:quadtree}.

\begin{figure}[htb]
\begin{center}
\scalebox{0.7}{
\input{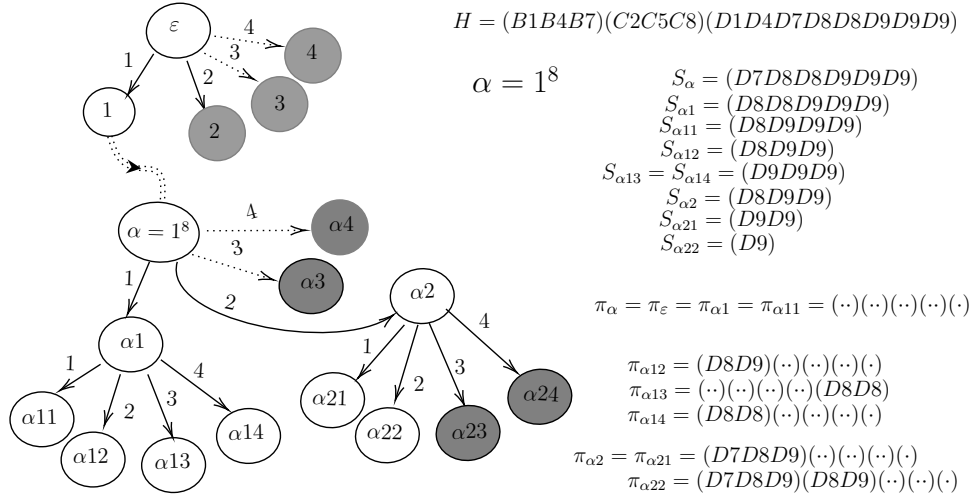}
}
\end{center}
\caption{The quadtree (partial) of a 14-tile $H$, where black nodes are closed nodes that will not be put into $Q$.}\label{fig:quadtree}
\end{figure}

We use a queue $Q$ to keep all nodes to be explored and a value $\val$ to denote the current minimum cost. The root node is denoted by the empty word $\varepsilon$, which has  $S_\varepsilon=H$ and $\pi_\varepsilon$ defined as $\pi_\varepsilon[i]=\varnothing$ for $0\leq i\leq 4$. Initially, we set $\val=6$, the largest possible deficiency number and put the root node $\varepsilon$ in $Q$. 
 
Suppose a node $\alpha=(S_\alpha,\pi_\alpha)$ is popped out from the queue $Q$. 
We \emph{expand} $\alpha$ as follows. Let $t=S_\alpha[0]$ be the first tile in the subsequence $S_\alpha$. We add  up to four child nodes $\alpha1$ to $\alpha4$ under the node $\alpha$. For each $\ell\in\{1,2,3,4\}$, $S_{\alpha\ell}$ and $\pi_{\alpha\ell}$ are obtained by,  respectively, \emph{reducing} $S_\alpha$ and \emph{refining} $\pi_\alpha$ as follows. 
 
 \paragraph{Notation}\quad Given a tile $t=(c,n)$, we write $t^+$ and $t^{++}$ for $(c,n+1)$ and $(c,n+2)$, respectively, as long as they are legal Mahjong tiles.
 
 \begin{itemize}
     \item [1.] (pass) Define $S_{\alpha1}=S_\alpha\setminus (t)$ and $\pi_{\alpha1}=\pi_\alpha$.
     
     \item [2.] (make chow) If $t^+$ or $t^{++}$ is in $S_\alpha$,
     define $S_{\alpha2}=S_\alpha \setminus (tt^+t^{++})$.     Suppose $i$  is the first index in $\{0,1,2,3\}$ s.t.  $\pi_\alpha[i]=\varnothing$. Define $\pi_{\alpha2}[i]=(tt^+t^{++})\cap S_\alpha$  and $\pi_{\alpha2}[j]= \pi_\alpha[j]$ for $j\not=i$.
    \item [3.] (make eye) If $(tt)\subseteq S_\alpha$ and $\pi_\alpha[4]$ is empty, define $S_{\alpha3} = S_\alpha \setminus (tt)$ and $\pi_{\alpha3}[4]=(tt)$ and $\pi_{\alpha3}[j]=\pi_\alpha[j]$ for $0\leq j\leq 3$.
    
    \item [4.] (make pong) If $(tt)\subseteq S_\alpha$,  define  $S_{\alpha4} = S_\alpha \setminus (ttt)$.
    Suppose $i$ is the first index in $\{0,1,2,3\}$ s.t. $\pi_\alpha[i]=\varnothing$. Define $\pi_{\alpha4}[i]=(ttt)\cap S_\alpha$  and $\pi_{\alpha4}[j]= \pi_\alpha[j]$ for $j\not=i$.
 \end{itemize}
 If a certain condition is not satisfied, e.g., when neither $t^+$ nor $t^{++}$ is in $S_\alpha$, this node is not introduced. For each child node $\alpha\ell$, if $\pi_{\alpha\ell}[i]\not=\varnothing$ for all  $0\leq i\leq 4$, we compare the value $\cost{\pi_{\alpha\ell}}$ with the current best value $\val$, update $\val$ as $\cost{\pi_{\alpha\ell}}$ if the latter is smaller, and terminate this branch. 

Suppose $\pi_{\alpha\ell}[i]$ is empty for some $i$. If there are tiles left to process, i.e., $S_{\alpha\ell}\not=\varnothing$, we put $\alpha\ell$ in $Q$ and expand it later; otherwise, we derive the cost of a refinement of $\pi_{\alpha\ell}$ that has no empty placeholder (see \cite{li2019lets} for details) and compare that cost with $\val$, update $\val$ accordingly, and terminate this branch. If $Q$ is nonempty and $\val>0$, we pop out the next node $\beta$ from $Q$ and expand $\beta$ as above. The procedure terminates when $\val=0$ or $Q=\varnothing$.

\begin{myexp}
\label{ex:T9}
Consider the 14-tile shown in Fig.~\ref{fig:quadtree}:
\begin{align*}
H &=(B1B4B7)(C2C5C8)(D1D4D7D8D8D9D9D9).
\end{align*}
Let $\mathcal{T}$ be the quadtree of the 14-tile $H$. For $1\leq k\leq 8$, $\alpha_k= {1^k} \equiv \underbrace{1\cdots 1}_k$ is the only depth $k$ node of $\mathcal{T}$ that are not closed. In particular, $\pi_{\alpha_8}=\pi_\varepsilon$ and $S_{\alpha_8}=(D7D8D8D9D9D9)$. As there are 6 tiles to be processed, we can make at most two melds or one meld and one eye, which left three places of the {\pd} empty. This implies that, for any node $\beta$ that is a descendant of $\alpha_9=1^9$, we have  ${\cost{\pi_\beta}}\geq 6\geq \val$. Thus, it is not necessary to expand $\alpha_9$. 
\end{myexp}



Consider the 14-tile and {\pd}s in Example~\ref{ex:3pd} again. None of these {\pd}s is associated with a node in the quadtree, but $\pi_0$ is a direct refinement of $\pi_{\alpha}$ with $\alpha=121411122$ by putting $B8$ in the eye holder. Analogously, $\pi_1$ is a direct refinement of $\pi_{\beta}$ with $\beta=12411122$ by putting $B1$ in the eye holder. However, even if we removing $B1$ from the eye holder, $\pi_2$ is still not associated with any node in the quadtree. This is because, when $S_\gamma=(D4D5D5D6D7)$ and we examine the tile $D4$, if we select action 2 and put $D4$ and $D5$ in a meld holder, then according to the quadtree construction we should also put $D6$ together with $D4$ and $D5$. This does not affect the correctness of the method as long as the current knowledge of available tiles is not concerned (cf. Example~\ref{ex:TT2} below).

In the worst case, the quadtree has a maximum depth of 14, as $S_{\alpha_k}$ with $\alpha_k = 1^k$ is a node in the quadtree with length $k$ ($0\leq k\leq 14$). Often, we may stop earlier if we know for certain that no refinement of $\pi_\alpha$ can have a cost smaller than $\val$.

Since this exhaustive method is inefficient, in practice, we have to adopt an approximate method by, e.g., considering only nodes with depth smaller than or equal to 8. While this is sufficient in most cases, it is still not very efficient and,  in some cases, is inexact and may lead to bad decisions. 

\section{Knowledge-aware Deficiency}\label{sec:kb-deficiency}
Another problem with the quadtree method is that it does not take into consideration the agent's knowledge about the game state. In this section, we propose a modified definition of the deficiency number, which respects the agent's knowledge about the available tiles in the current state of the game.
\subsection{Knowledge base and knowledge-aware deficiency}
\label{sec:kb&qd}
The agent's knowledge base contains all her information of available tiles.
\begin{dfn}[knowledge base \cite{li2019lets}] \label{dfn:KB}
The knowledge base  of the agent is a 27-tuple $KB$. For each tile $t=(c,n)$ ($0\leq c\leq 2$, $1\leq n \leq 9$),  $KB(t) = KB(c,n)\define KB[9c+n-1]$  denotes the multiplicity of  $t$ the agent \emph{believes} to be available. 
\end{dfn}
In what follows, we say $KB$ has a tile $t$, or say $t$ is in $KB$, if $KB(t)>0$. Similarly, we say $KB$ has a pair (pong) if there is a tile $t$ s.t. $KB(t)\geq 2$ ($KB(t)\geq 3$) and say $KB$ has a chow if there is a chow $(t_1,t_2,t_3)$ s.t. $\min(KB(t_1),KB(t_2),KB(t_3))\geq 1$.

Initially, we have $KB(t) = 4$ for each tile $t$. When all players have their hands, the agent also has her hand $H$ and updates her $KB$ accordingly as
\begin{align}\label{eq:omega}
KB(t) = 4- \mbox{the number of  $t$ in $H$}. 
\end{align}
Then she continues to modify $KB$ according to the process of the game. For example, if one player discards a tile $t$ and a pong of $t$ is made by some other player, then the agent updates her $KB$ decreasing by 3 her $KB(t)$ and leaves the other items unchanged. The agent may also update her $KB$ by reasoning. For example, suppose player $A$ declared win after robbing a tile $C2$. If the agent's $KB$ shows that $C2$ and $C4$ are not available, she can infer that player $A$ must have $C1$ and $C3$ in his hand (to make a meld $(C1C2C3)$). Accordingly, the agent could update her $KB$ decreasing by one her $KB(C1)$ and $KB(C3)$.

In what follows, whenever a $k$-tile $H$ and a knowledge base $KB$ appear together, we always assume that they are \emph{compatible} in the sense that the number of any tile in $H$ and $KB$ is not greater than 4. 



If every tile that can complete a hand $H$ has been discarded, ponged, or robbed,  $H$ becomes incompletable in the current $KB$. This suggests that   Definition~\ref{dfn:dfncy} should take $KB$ into consideration. 

For convenience, for a $k$-tile $H$, a knowledge base $KB$, and any tile $t$, we write $H\oplus t$ as the $(k+1)$-tile obtained by adding $t$ to $H$ (after reordering), and  $KB\ominus t$ as the updated knowledge base which differs from $KB$ only in that it has one less $t$ than $KB$ does.

\begin{dfn} [knowledge-aware deficiency]
\label{dfn:kb-dfncy}
The \emph{deficiency number} \rm{(}or  \emph{deficiency}\rm{)} of a 14-tile $H$ w.r.t. a knowledge base $KB$ is defined recursively:
\begin{itemize}
    \item $H$ has deficiency 0 if it is complete;
    \item In general, for $\ell\geq 0$, $H$ has deficiency $\ell+1$ if it has no deficiency smaller than or equal to $\ell$ and there exists a tile $t$ in $H$ and another tile $t'$ that is available in $KB$ (i.e., $KB(t')>0$) s.t. $H[t/t']$   has deficiency $\ell$ w.r.t. the updated knowledge base $KB \ominus t'$.
\end{itemize}
If the deficiency of $H$ is $\ell$, we write $\dfncy(H, KB) = \ell$. We say $H$ is incompletable if it has no finite deficiency.

Suppose $H$ is a 13-tile. The deficiency of $H$ w.r.t. a knowledge base $KB$ is defined as the minimum of 
    $\dfncy(H\oplus t, KB \ominus t)+1$ over all tiles $t$ with $KB(t)>0$. 
\end{dfn}

It is easy to see that, when ignoring the knowledge base $KB$, i.e., interpreting $KB$ as Eq.~\eqref{eq:omega}, then $\dfncy(T)=\dfncy(T,KB)$.


\begin{myexp}
\label{ex:TT2}
Consider the 14-tile $H$ and its knowledge base $KB$
\begin{align*}
H &=(B1B5B6B8B8B8B9)(\cdot)(D1D2D4D5D5D6D7),\\
KB &= (343423023)(434434443)(334220344).
\end{align*}
For clarity, we group the 27-tuple $KB$ into three 9-tuples, representing the local knowledge bases of colour $B,C,D$. In the quadtree search process, the {\pd} at node $\alpha=(12311222)$ is $\pi= (B5B6)(D1D2)(D4D5D6)(D5D7)(B8B8)$. If we don't consider the knowledge base, $\pi$ has cost 3. Let $\pi^*=(B5B6)(D1D2)$ $(D4D5)(D5D6D7)(B8B8)$. Then $\pi^*$ is another {\pd} with cost 3. However, $\pi^*$ is not the {\pd} of any node in the quadtree! This is because, by construction, if $(D4D5D6)$ is a subsequence of $S_\beta$ and $D4=S_\beta[0]$, then we shall enrich $\pi_{\beta}$ with the chow $(D4D5D6)$ instead of a pchow $(D4D5)$.   

Taking $KB$ into consideration, $\pi$ is incompletable under $KB$ as it has a pmeld $(D5D7)$ but $D6$ is not available in $KB$. In comparison, $\pi^*$ is still completable under $KB$ and has cost 3 in the sense of Definition~\ref{dfn: dcmp_cost}.
\end{myexp}
This example shows that the quadtree method is incomplete when the knowledge base is taken into consideration, as some {\pd}s with the minimal cost are not reachable. In order to get a complete method, we could expand the quadtree into a 6-ary tree, where each node have up to six child nodes s.t. pmelds like  $(D4D5)$ and $(D4D6)$ could appear in the 6-ary tree of the above example. However, this will make the algorithm even  slower (cf. Sec.~\ref{sec:experiment} for evaluation). Therefore, it is necessary to devise a new deficiency calculation algorithm which is efficient and respects the knowledge base. To this end, we introduce the following adapted notion of quasi-decompositions and reduce the deficiency of a hand w.r.t. to a knowledge base to the minimal cost over all its quasi-decompositions.

\subsection{Quasi-decompositions}

We need the following notion of completable pmeld. 
\begin{dfn} [completable pmeld] \label{dfn:completable_pmeld}
A pchow $(t,t')$ is completable under $KB$ if $KB$ has a tile $t''$ s.t. $(t,t',t'')$ (after reordering) is a chow. Similarly, a pair $(t,t)$ is completable under $KB$ if $KB$ has an identical copy of $t$ (i.e., $KB(t)>0$). A pmeld is completable if it is a completable pchow or a completable pair.  
\end{dfn}

A quasi-decomposition is a variant of pre-decomposition that respects the knowledge base but does not fix the last place as the eye holder.
\begin{dfn} [quasi-decomposition] \label{dfn:qdcmp}
Let $H$ be a $k$-tile and $KB$ a knowledge base. A quasi-decomposition ({\qd}) $\pi=(\pi[0],\pi[1],\cdots$, $\pi[k])$ of $H$ w.r.t.  $KB$ is a set of (possibly identical) subsequences of $H$ s.t. 
\begin{itemize}
    \item  $k\leq 4$ and each $\pi[i]$ is a meld, a pair, or a pchow. 
    \item  If $k=4$, $\pi$ contains at least one pair.
    \item Except at most one pair, all pmelds in $\pi$ are completable under $KB$.
    \item $\bigcup_{i=0}^k \pi[i] $ is contained in $H$. 
\end{itemize}
 The \emph{remainder} of $\pi$ in $H$ is the sequence of tiles in $H$ that are not in $\bigcup_{i=0}^k \pi[i] $.
\end{dfn}

\begin{myexp}
\label{ex:running_example}
Let 
\begin{align}
    H &= (\cdot)(C1C4C6C7C8C9)(D1D2D3D6D6D7D8)\\
    KB & = (001100121)(010000030)(032242321)
\end{align}   
As the knowledge base contains only one $C2$ and three $C8$, pchows $(C4C6)$, $(C6C8)$, $(C7C8)$, and $(C8C9)$ are incompletable. 
Then 
\begin{align*}
\pi_0 &= (\cdot),\\
\pi_1 &=((C6C7),(D1D2D3), (D6D6),(D7D8)),\\
\pi_2 &=((C7C9),(D1D2D3), (D6D6),(D7D8) ),\\ 
\pi_3 &=((C6C7C8),(D1D2D3),(D6D6), (D7D8)),\\ 
\pi_4 &=((C7C8C9),(D1D2D3),(D6D6), (D7D8))
\end{align*}
are examples of valid {\qd}s of $H$. In addition, although $(C6C7)$ and $(C7C9)$ are completable, they can only be completed by $C8$, which is also in the remainder of the corresponding ${\qd}$s, viz. $\pi_1$ and $\pi_2$. Intuitively, their completion costs are larger than that of $\pi_3$ and $\pi_4$ and thus it is not necessary to evaluate them. In practice, we need only evaluate $\pi_3$ and $\pi_4$. 
\end{myexp}

Suppose $\pi$ is a  {\qd} of $H$ under $KB$ and $R$ the remainder of $\pi$ in $H$. The cost of $\pi$ under $KB$, written $\cost{\pi,KB}$, is the minimal number of tile changes to complete $\pi$ into a decomposition by applying the following procedure until we are certain that $\pi$ is incompletable or have four melds and one eye.
\begin{itemize}
    
    \item [(a)] If $\pi$ has an incompletable pair, select it as the eye; otherwise, select a pair in $\pi$ to act as the eye or create the eye from scratch. 
    
    \item [(b)] Complete every pmeld in $\pi$ that does not act as the eye.
    
    \item [(c)] If by far $\pi$ has three or less melds, create a new meld from scratch until $\pi$ has four melds.
\end{itemize}
More precisely, suppose $(t,t')$ is a pmeld in $\pi$ which we want to complete. If $(t,t')$ is a pchow, we borrow one tile from $KB$ and complete it into a chow, increasing the cost by 1. Suppose $(t,t)$ is a pair that is not selected as the eye in Step (a). Since it is completable, we borrow one tile $t$ from $KB$ (and increase the cost by 1) to complete $(t,t)$ into a pong.

To create the eye from scratch, we first examine if there is a tile $t$ in $R$ which has an identical tile in $KB$.\footnote{It is possible that $R$ contains a pair $(t,t)$, we don't regard the cost to create the eye as 0.  This is because the deficiency is determined by considering all {\qd}s and $(t,t)$ appears in some other {\qd} $\pi'$ that is finer than $\pi$ and has a smaller cost.}   If so, we use $(t,t)$ as the eye and this incurs cost 1. Suppose otherwise. We check if $KB$ has a pair; if so, we use it as the eye with the cost increased by 2. In case that $KB$ contains no pairs, we cannot create a new pair and need to go back to Step (a) and check if we can use a pair in $\pi$ as the eye. If that is still impossible, then $\pi$ is incompletable.

Analogously, to create a new meld from scratch, we increase the cost by 2 if it is possible to make a meld by using one tile from $R$ and borrowing two tiles from $KB$; otherwise, we increase the cost by 3 if $KB$ contains a meld. If neither is true, then $\pi$ is incompletable.

Suppose the procedure terminates with a decomposition $\pi^*$ and the total cost is $c$. We call $\pi^*$ a completion of $\pi$ and $c$ the cost of $\pi$, written as $\cost{\pi,KB}$. 

Similar to Theorem~\ref{thm:m-cost}, knowledge-aware deficiency has the following characterisation. 
\begin{theorem} \label{thm:m-cost-kb}
Let $H$ be a hand and $KB$ a knowledge base. The deficiency of $H$ w.r.t. $KB$ is the minimal of $\cost{\pi,KB}$ for all  {\qd}s $\pi$ of $H$ under $KB$. 
\end{theorem}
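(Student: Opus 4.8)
The plan is to prove the stated identity by two matching inequalities, in the spirit of Theorem~\ref{thm:m-cost} but now carrying the knowledge base through every step. Write $m=\min_\pi\cost{\pi,KB}$, the minimum taken over all {\qd}s $\pi$ of $H$ under $KB$. It helps to first reformulate $\dfncy(H,KB)$ (via the same induction as Definition~\ref{dfn:kb-dfncy}) as the least $|H^*\setminus H|$ over complete 14-tiles $H^*$ with $H^*\setminus H\subseteq KB$ as multisets; this ``target-hand'' view makes simultaneous availability of the drawn tiles transparent. For the easy inequality $\dfncy(H,KB)\leq m$, I would fix a {\qd} $\pi$ attaining cost $m$ together with the decomposition $\pi^*$ output by the cost procedure, and observe by inspecting that procedure that each of its steps raises the cost by exactly the number of tiles it \emph{borrows} from $KB$ (a pchow or non-eye pair costs $1$ and borrows $1$; a meld or eye built from a remainder tile costs $2$ and borrows $2$; a meld or eye built entirely from $KB$ costs $3$ or $2$ and borrows that many). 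Hence the borrowed multiset $B$ has size $m$ and, since borrowing only depletes $KB$, satisfies $B\subseteq KB$. Counting tiles, $H^*$ arises from $H$ by discarding the $m$ remainder tiles not reused and inserting $B$, and ordering these $m$ swaps so that each intermediate hand stays a legal 14-tile yields a length-$m$ sequence of knowledge-aware changes from $H$ to a complete hand, so $\dfncy(H,KB)\leq m$.

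For the reverse inequality $\dfncy(H,KB)\geq m$, I would take a complete $H^*$ with $H^*\setminus H\subseteq KB$ and $|H^*\setminus H|=\ell\define\dfncy(H,KB)$, and fix a decomposition $D^*$ of $H^*$. The crucial monotonicity remark is that $KB$ only shrinks along any realizing change sequence, so every tile of $H^*\setminus H$ (the tiles ``drawn'') is available already in the \emph{original} $KB$. From $D^*$ I would read off a {\qd} $\pi$ of $H$: for each meld or eye of $D^*$, keep the sub-sequence of its tiles already in $H$; a block contributing all three tiles becomes a meld of $\pi$, a block contributing two becomes a pchow or pair, and a block contributing at most one is pushed into the remainder. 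By the monotonicity remark every retained pchow and every retained non-eye pair is completable under the original $KB$ (its missing tile lies in $H^*\setminus H\subseteq KB$), while an incompletable eye-pair is exactly the single pair that Definition~\ref{dfn:qdcmp} is allowed to exempt; this, together with the count $k\leq4$ and the guarantee that a fifth block can only be the eye-pair, is what certifies $\pi$ is a legitimate {\qd}.

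It then remains to check that the cost procedure applied to this $\pi$ costs at most $\ell$, which I would verify by a term-by-term match against the draws. A kept pchow or non-eye pair needs precisely the one tile drawn for it in $D^*$ (cost $1$); a block of $D^*$ contributing a single $H$-tile is rebuilt from scratch using that remainder tile plus its two drawn tiles (cost $2$), and a block contributing none is rebuilt entirely from $KB$ (cost $3$); the two eye cases match identically, with the footnoted convention for creating the eye from a remainder tile being exactly the case that keeps the accounting exact. Since each case borrows precisely the tiles that $D^*$ drew, the total cost equals the number of drawn tiles, namely $\ell$, and therefore $m\leq\cost{\pi,KB}\leq\ell$.

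The main obstacle I expect is not either inequality in isolation but the multiset bookkeeping common to both. One must argue that in a minimal change sequence the number of changes equals the number of surviving draws (no net-redundant swap, else minimality fails), that the drawn tiles are available in $KB$ simultaneously rather than merely one at a time (handled by $KB$-monotonicity and the ``target-hand'' reformulation, together with the fact that the cost procedure actually depletes $KB$ as it borrows), and that the block-by-block cost rules line up exactly with the draws of the sequence. Turning this into a clean bijection between drawn tiles and unit cost increments, while scrupulously respecting the at-most-one-exempt-pair clause of Definition~\ref{dfn:qdcmp} and the special conventions for creating the eye and for melds built from a lone remainder tile, is the technical heart of the argument.
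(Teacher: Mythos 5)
The paper gives no proof of Theorem~\ref{thm:m-cost-kb} for you to be measured against: the appendix proves only the three cost lemmas (Lemmas~\ref{lem:incompletable}, \ref{lem:m+n>=4} and \ref{lem:cost3}), and the theorem is merely asserted as ``similar to Theorem~\ref{thm:m-cost}'', whose proof lives in \cite{li2019lets}. Judged on its own, your proposal is sound and supplies exactly the missing argument. The target-hand reformulation of Definition~\ref{dfn:kb-dfncy} (deficiency $=$ least $|H^*\setminus H|$ over complete $H^*$ with $H^*\setminus H\subseteq KB$ as multisets) is correct, and you flag the two points on which it hinges: in a minimal change sequence no drawn tile is later discarded (otherwise merging the two swaps shortens the sequence, and since $KB$ only shrinks, all later draws remain available), and conversely any such $H^*$ is reachable by $|H^*\setminus H|$ legal swaps because compatibility bounds every intermediate multiplicity by $\max(\mathrm{count}_H(u),\mathrm{count}_{H^*}(u))\leq 4$. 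Your restriction of a decomposition $D^*$ of $H^*$ to the $H$-owned tiles does yield a legitimate {\qd}: each kept pchow or non-eye pair has its missing tile among the draws, hence in $KB$, so the only possibly incompletable pair is the one inherited from the eye block --- precisely the exemption in Definition~\ref{dfn:qdcmp} --- and a five-component restriction forces the eye block to contribute a pair. The cost-equals-borrows invariant then delivers both inequalities.

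Two small points. First, your parenthetical enumeration misprices one case: creating the eye from a remainder tile costs $1$ and borrows $1$ (this is the footnoted convention, and the text states it explicitly), not $2$ and $2$; the invariant you actually use --- each unit of cost corresponds to exactly one borrowed tile --- nevertheless holds in every branch, so nothing downstream breaks. Second, your reading that the cost procedure depletes $KB$ as it borrows is not explicit in the paper's informal description of $\cost{\pi,KB}$, but it is forced: without cumulative depletion the theorem is false (two pchows could both ``borrow'' the last copy of the same tile, making the computed cost strictly smaller than the true deficiency), so making this interpretation explicit, as you do, closes a genuine ambiguity in the paper rather than introducing a gap in your proof.
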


The next subsection shows that the cost of a {\qd} can be determined by its attributes. 

\subsection{Attributes of a \qd} \label{sec:attr}
Let $H$ be a $k$-tile and $KB$ a knowledge base. Suppose  $\pi$ is a {\qd} of $H$ and $R$ the remainder of $\pi$ in $H$. We introduce the following attributes for $\pi$, where $m,n,p,e$ are non-negative integers and $re,rm,em$ are Boolean values.
\begin{itemize}
    \item [$m$] The number of melds in $\pi$. 
    \item [$n$] The number of pmelds in $\pi$, including pchows and pairs, where each pchow is completable w.r.t. $KB$. 
    \item [$p$] The number of pairs in $\pi$. 
    \item [$e$] The number of pairs in $\pi$ which are incompletable. 
    \item [$re$] If the remainder has a tile $t$ which is also in $KB$, then $re=1$; otherwise, it's 0. If $re=1$, we can make the eye based on one tile recycled from $R$. 
    \item [$rm$] If the remainder has a tile $t$ which can evolve into a meld (i.e., a chow or a pong) given the current $KB$, then $rm=1$; otherwise, it's 0. If $rm=1$, we can make a meld based on one tile recycled from $R$. 
    \item [$em$] Suppose $e=0$ and $re=rm=1$ and we need to make a meld and the eye both from scratch. If we cannot make a meld from a tile in $R$ after making the eye starting from a tile in $R$,  or vice versa, then we say there is an \emph{eye-meld conflict} and set $em=1$. In all other cases, we set $em=0$.
\end{itemize}
In addition, we have the following attributes for $KB$.
\begin{itemize}
    \item [$ke$] If the $KB$ contains a pair, then $ke=1$; otherwise, it's 0. 
    \item [$km$] If the $KB$ contains a meld, then $km=1$; otherwise, it's 0.
\end{itemize}

\paragraph{About the notations} {The notation $xy$ for $x\in\{r,k\}$ and $y\in\{e,m\}$ reads as, if $y=e$ ($y=m$, resp.), we can make the {\bf e}ye (a {\bf m}eld, resp.) from $r$ (the {\bf r}emainder) or $k$ (the {\bf k}nowledge base).}

The following result is clear from the definition.
\begin{lemma}\label{lem:7attr}
Let $H$ be a $k$-tile, $KB$ a knowledge base, and $\pi$ a {\qd} of $H$ under $KB$. Suppose $m,n,p,e$ are attributes of $\pi$ defined above. Then  $e\leq 1$, $m+n\leq 5$, $3m+2n\leq k$, and $n\geq p\geq e\geq 0$. If $m+n=5$, then $p>0$.
\end{lemma}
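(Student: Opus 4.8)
The statement collects several bookkeeping facts that follow directly from Definition~\ref{dfn:qdcmp} together with the definitions of the attributes $m,n,p,e$, so the plan is simply to verify each (in)equality in turn. To keep the argument unambiguous I would first fix notation, writing $\pi=(\pi[0],\dots,\pi[s])$ for its $s+1$ components with $s\leq 4$, and recall that by definition every component is either a meld (three tiles) or a pmeld, i.e.\ a pair or a pchow (two tiles). I read the containment condition $\bigcup_{i=0}^{s}\pi[i]\subseteq H$ as asserting that the components occupy pairwise disjoint tiles of $H$, so that their tile counts add up.

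The order relations are immediate. Every pair is a pmeld and every incompletable pair is a pair, so $e\leq p\leq n$, and $e\geq 0$ since it is a count; this gives $n\geq p\geq e\geq 0$. For $e\leq 1$ I would invoke the third clause of Definition~\ref{dfn:qdcmp}: at most one pmeld of $\pi$ may be incompletable, and the incompletable pairs counted by $e$ form a subset of the incompletable pmelds, whence $e\leq 1$.

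The two counting bounds come next. Since each of the $s+1$ components is counted in exactly one of $m$ or $n$, we have $m+n=s+1\leq 5$. For the sharpened form, $m+n=5$ forces $s=4$, and the second clause of Definition~\ref{dfn:qdcmp} then guarantees at least one pair in $\pi$, so $p\geq 1>0$. Finally, a meld carries three tiles and a pmeld two, so the components together use exactly $3m+2n$ tiles; as these form a sub-multiset of the $k$-tile $H$, we obtain $3m+2n\leq k$.

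The proof involves no genuine difficulty, since ``clear from the definition'' is accurate here; the only two points deserving explicit care are the clash between the two roles of the symbol $k$ (the size of $H$ in the lemma versus the top index of the {\qd} in Definition~\ref{dfn:qdcmp}), which I resolve by renaming the latter to $s$, and the disjointness reading of the containment condition, on which the additivity underlying the bound $3m+2n\leq k$ rests. I would state both explicitly so that the counting step is rigorous.
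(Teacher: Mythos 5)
Your proof is correct and matches the paper's treatment: the paper gives no explicit proof, stating only that the lemma is ``clear from the definition,'' and your argument is precisely the routine definitional verification being alluded to, with each claim traced to the right clause of Definition~\ref{dfn:qdcmp} ($e\leq 1$ from the at-most-one-incompletable-pair clause, $p>0$ when $m+n=5$ from the pair-requirement when there are five components, and the counting bounds from the sizes of melds and pmelds). Your two explicit clarifications --- renaming the overloaded index $k$ to $s$, and reading the containment $\bigcup_i\pi[i]\subseteq H$ as a multiset containment so that tile counts are additive --- are exactly the points a careful write-up should pin down, and both agree with the paper's intended reading (cf.\ its definition of the remainder as a multiset difference).
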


\begin{dfn}[type of {\qd}] \label{dfn:type}
Let $H$ be a  $k$-tile and $KB$ a knowledge base. Suppose  $\pi$ is  a {\qd} of $H$ under $KB$. Let $m,n,p,e,re,rm,em$ be attributes of $\pi$ defined as above. We call $\sigma(\pi) \define (m,n,p,e,re,rm, em)$ the type of $\pi$.
\end{dfn}

We next show how the cost of $\pi$ under $KB$ can be determined by its type.

\begin{lemma}\label{lem:incompletable}
Let $\pi$ be a  {\qd} of $H$ under $KB$. Suppose $\sigma(\pi)=(m,n,p$, $e,re,rm,em)$ is the type of $\pi$ and $ke,km$ the two attributes of $KB$. Then $\pi$ is incompletable if 
\begin{itemize}
\item [\rm(a)] $p=re=ke=0$; or 
\item [\rm(b)] $m+n\leq 4$, $rm=km=0$, and $re=ke=0$; or   
\item [\rm(c)] $m+n-e\leq 3$ and $rm=km=0$; or  
\item [\rm(d)] $m+n\leq 3$, $p=ke=km=0$ and $re=rm=em=1$.
\end{itemize}
\end{lemma}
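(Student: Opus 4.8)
The plan is to reduce all four cases to a single meld-counting identity and then read each hypothesis as the assertion that this identity has no admissible solution. First I would fix the bookkeeping. Since a completion $\pi^*$ must be finer than $\pi$, every block of $\pi$ is preserved: each of the $m$ melds stays a meld, each of the $n-p$ completable pchows must sit inside a meld and hence becomes a chow, and each of the $p$ pairs becomes either a pong or the eye; an incompletable pair (there are $e\le 1$ of them, and $e\le p$ by Lemma~\ref{lem:7attr}) can only be the eye, since it cannot grow into a pong. Writing $s\ge 0$ for the number of melds built from scratch and $\eta\in\{0,1\}$ for the indicator that one pair of $\pi$ serves as the eye, the demand for exactly four melds yields
\begin{equation*}
4=(m+n)-\eta+s,\qquad\text{so}\qquad s=4-(m+n)+\eta .
\end{equation*}
The admissibility constraints are: $e=1\Rightarrow\eta=1$ (the incompletable pair is forced to be the eye); $p=0\Rightarrow\eta=0$ (no pair is available); each from-scratch meld needs $rm=1$ or $km=1$; and a from-scratch eye ($\eta=0$) needs $re=1$ or $ke=1$.

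With this identity, cases (a)--(c) are immediate. For (a), $p=0$ forces $\eta=0$, so the eye must be built from scratch, yet $re=ke=0$ forbids this; no completion exists. For (b) and (c), the hypothesis $rm=km=0$ forbids every from-scratch meld, so $s=0$ and hence $m+n=4+\eta$. In case (c), since $e=1$ forces $\eta=1$ we always have $\eta\ge e$, whence $m+n-e=4+\eta-e\ge 4$, contradicting $m+n-e\le 3$. In case (b), $m+n\le 4$ together with $m+n=4+\eta$ forces $\eta=0$ and $m+n=4$; but $\eta=0$ requires a from-scratch eye, contradicting $re=ke=0$. Thus in each case the constraints are unsatisfiable and $\pi$ is incompletable.

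The delicate case, which I expect to be the main obstacle, is (d), for it rests not on a count alone but on the meaning of the eye-meld conflict flag $em$. Here $p=0$ forces $\eta=0$, so the eye is from scratch, and $ke=0$ forces it to be recycled from the remainder $R$ (using $re=1$). The identity then gives $s=4-(m+n)\ge 1$ because $m+n\le 3$, so at least one meld is also built from scratch, and $km=0$ forces that meld to be drawn from $R$ as well (using $rm=1$). Hence any completion must carve both the eye and a meld out of $R$ at once. The crux is to show that $em=1$ is precisely the obstruction to this joint demand: the premises $e=0$ (forced by $p=0$), $re=rm=1$, $ke=km=0$, and $m+n\le 3$ are exactly the hypotheses under which the definition of $em$ applies, and $em=1$ then asserts that committing a tile of $R$ to the eye leaves no tile of $R$ able to grow into a meld, and conversely. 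I would finish by verifying that no alternative route exists—every from-scratch eye and every from-scratch meld must draw from $R$—so the conflict genuinely blocks the last meld (or the eye), and $\pi$ is incompletable.
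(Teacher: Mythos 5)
Your proposal is correct and takes essentially the same approach as the paper: the paper's proof of (a)--(d) is exactly the case analysis you perform, resting on the same three facts (the eye must come from a pair of $\pi$, from $R$ via $re$, or from $KB$ via $ke$; each from-scratch meld needs $rm=1$ or $km=1$; a meld count shows when $s\geq 1$), with your identity $s=4-(m+n)+\eta$ merely making explicit the counting the paper leaves implicit in phrases like ``we need to create a new meld.'' Your handling of (d), including the observation that $p=0$, $ke=km=0$ force both the eye and a from-scratch meld to be carved out of $R$ so that $em=1$ is precisely the obstruction, matches the paper's argument.
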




The above conditions are sufficient but not necessary. However, we can easily see from the following two lemmas that they are necessary when $m+n \geq 4$, or when $m+n=3$ and $p=0$.

We next examine the cases when \red{$m+n \geq 4$} and $m+n-e=3$.
\begin{lemma}\label{lem:m+n>=4}
Let $\pi$ be a {\qd} of $H$ under $KB$ and  $(m,n,p,e,re$, $rm,em)$ its type. Suppose $m+ n \geq 4$. 
\begin{itemize}
    \item [\rm(a)] If $m+n>4$, then the cost of $\pi$ is $4-m$.
    \item [\rm(b)] Suppose $m+n=4$. If $e=0$ and $re=1$, or $p>0$ and $rm=1$,  then the cost of $\pi$ is $4-m+1$.
    \item [\rm(c)] Suppose $m+n=4$,  
    $re=0$ if $e=0$, and $rm=0$ if $p>0$.\footnote{Here a statement `$B$ if $A$' is equivalent to saying `$A\Rightarrow B$' or  `(not $A$) or $B$'. } 
    If $e=0$ and $ke=1$, or $p>0$ and $km=1$, then the cost of $\pi$ is $4-m+2$. 
    \item [\rm(d)] If none of the above, then $\pi$ is incompletable.   
\end{itemize}
\end{lemma}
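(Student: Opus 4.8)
The plan is to read the cost procedure (steps (a)--(c) preceding Theorem~\ref{thm:m-cost-kb}) as an assignment of the five target components of a decomposition---four meld slots and one eye slot---to sources, and to compute the minimal total number of borrowed tiles. I first record the unit costs once and for all: a meld of $\pi$ filling a meld slot costs $0$; a pmeld (pchow or pair) completed into a meld costs $1$; a pair used as the eye costs $0$; creating the eye from scratch costs $1$ when $re=1$ and $2$ when $re=0$ but $ke=1$; and creating a meld from scratch costs $2$ when $rm=1$ and $3$ when $rm=0$ but $km=1$. I also fix the hard constraints that drive the case split: an incompletable pair can only fill the eye slot, a pchow can only fill a meld slot, a completable pair can fill either (as a pong at cost $1$ or as the eye at cost $0$), and by Lemma~\ref{lem:7attr} we have $e\le 1$, $m+n\le 5$, and $p>0$ whenever $m+n=5$.

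First I would dispose of part (a). Since $m+n>4$ forces $m+n=5$ by Lemma~\ref{lem:7attr}, which also gives $p>0$, there are exactly five blocks for the five slots. I assign one pair to the eye slot (choosing the incompletable pair when $e=1$, since that is the only slot it can occupy) at cost $0$, and complete the remaining $m$ melds and $n-1$ pmelds into the four meld slots. This costs $n-1=4-m$. Minimality is immediate: with exactly five blocks for five slots, any from-scratch creation wastes a block and costs at least as much as completing it, so $4-m$ cannot be beaten; and since at most one pair is incompletable and it is used as the eye, every non-eye pmeld is completable, so the assignment is always legal and $\pi$ is never incompletable here.

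Next I would treat $m+n=4$, which underlies parts (b)--(d). Now there are only four blocks for five slots, so exactly one slot must be created from scratch, and the whole question is which slot and from what source. I would split on the eye. If $e=1$ the incompletable pair is forced into the eye slot at cost $0$; if $e=0$ but a pair is still routed to the eye (possible only when $p>0$) the accounting is identical; in both cases the remaining three blocks fill three meld slots at cost $n-1$ and a fourth meld is built from scratch. If instead the eye is created, all four blocks fill the four meld slots at cost $n$ and the eye is built from scratch. Substituting $n=4-m$ together with the from-scratch costs shows that every completion using a remainder tile for the missing slot costs $4-m+1$ (an eye with $re=1$, or a meld with $rm=1$), every completion forced onto $KB$ costs $4-m+2$ (an eye with $ke=1$, or a meld with $km=1$), and when neither a remainder tile nor a suitable $KB$ block is available the missing slot cannot be filled, which is the incompletable case (d). I would close the loop by checking, via Lemma~\ref{lem:incompletable}, that the listed failure conditions are exactly those ruling out every source.

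The main obstacle will be the minimality half rather than exhibiting a completion of the stated cost. The delicate point is the pair-versus-eye trade-off when $m+n=4$ and $p>0$: routing a pair to the eye (cost $0$) forces a meld from scratch (cost $\ge 2$), whereas completing that pair into a pong (cost $1$) and building the eye from the remainder (cost $1$ when $re=1$) can tie but never beat it, so I must verify that no mixed strategy slips below $4-m+1$. The lower bound follows by arguing that discarding any block is never beneficial (a pmeld costs at most $1$ to complete, while any from-scratch component costs at least $1$), so an optimal completion uses all four blocks and leaves exactly one slot to be created; pairing each slot's minimum contribution with the constraints that pchows cannot be eyes and incompletable pairs cannot be melds then pins the minimum at the claimed value and secures the sharp failure in (d).
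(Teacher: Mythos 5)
Your proposal is correct and takes essentially the same route as the paper's proof: the same split into $m+n=5$ (where Lemma~\ref{lem:7attr} forces $p>0$ and a pair serves as the eye at cost $0$) versus $m+n=4$, the same two completion routes with identical unit costs (pair-as-eye plus a from-scratch meld costing $2$ if $rm=1$ or $3$ if $km=1$, versus a from-scratch eye costing $1$ if $re=1$ or $2$ if $ke=1$), and the same reduction of case (d) to Lemma~\ref{lem:incompletable}. Your slot-assignment framing simply makes explicit the optimality comparisons (discarding a block never helps; exactly one slot is built from scratch when $m+n=4$) that the paper's proof leaves implicit.
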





Note in this case the eye-meld conflict does not matter as we do not need to make the eye and a new meld simultaneously. The next lemma considers the cases when the number of melds and completable pmelds in $\pi$ is three. 
\begin{lemma}\label{lem:cost3}
Let $\pi$ be a  {\qd} of $H$ under $KB$ and  $(m,n,p,e,re$, $rm,em)$ its type. Suppose $m+ n-e = 3$ and $\max(p, re,ke)>0$ and $\max(rm,km)=1$. Let $mcost=2\times rm+3\times(1-rm)$ and $ecost=re+2\times(1-re)$. 
\begin{itemize}\label{lem:cost2}
    \item [\rm(a)] If $e=1$, then the cost of $\pi$ is $n-1 + mcost$.
    
    \item [\rm(b)]  If $p=em=0$, then the cost of $\pi$ is $n + mcost + ecost$.

    \item [\rm(c)]  If $p=0$, $em=1$, and $\max(ke,km)=1$, then the cost of $\pi$ is $n + 4\geq 4$.
    
    \item [\rm(d)]  If $p>e=0$, then the cost of $\pi$ is at least $\min(f_1,f_2)\geq 4$, where $f_1 \define n + mcost + ecost$ and $f_2 \define n-1 + 2\times mcost$.
\end{itemize}
\end{lemma}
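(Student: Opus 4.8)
The plan is to follow the completion procedure (a)--(c) that defines $\cost{\pi,KB}$ and to track how each step contributes to the total, splitting on the two strategic choices the procedure allows: whether the eye is taken from an existing pair of $\pi$ or built from scratch. Throughout I will use Lemma~\ref{lem:7attr} to note that $e\leq 1$, so the hypothesis $m+n-e=3$ leaves exactly two regimes, namely $e=0$ with $m+n=3$, and $e=1$ with $m+n=4$. I will also record the unit costs fixed by the attributes: completing any single completable pmeld (a pchow into a chow, or a surplus pair into a pong) costs $1$; the cheapest meld built from scratch costs exactly $mcost$ (that is, $2$ when a remainder tile can seed it, so $rm=1$, and $3$ when only $KB$ supplies one, so $km=1$), which is legitimate because the precondition $\max(rm,km)=1$ guarantees such a meld exists; and similarly the cheapest eye built from scratch costs exactly $ecost$, achievable since $\max(p,re,ke)>0$.

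First I would dispatch the two exact cases. For (a), since $e=1$ the unique incompletable pair is forced to be the eye by Step (a) of the procedure; completing the remaining $n-1$ pmelds costs $n-1$, leaving $m+(n-1)=3$ melds, so one further meld from scratch costs $mcost$, giving $n-1+mcost$. For (b), $p=0$ forces $e=0$ and $m+n=3$, so no pair is available as an eye and it must be created from scratch; completing all $n$ pmelds costs $n$ and yields exactly three melds, after which one new meld ($mcost$) and the eye ($ecost$) are needed. The point is that $em=0$ rules out an eye--meld conflict, so both can be realised at their individual minima simultaneously, giving exactly $n+mcost+ecost$.

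The delicate cases are (c) and (d). For (c), again $m+n=3$ and the eye is built from scratch, but $em=1$ means $re=rm=1$ together with a genuine conflict, so drawing both the eye and the extra meld from distinct remainder tiles (which would cost $1+2=3$) is unavailable. I would then enumerate the feasible eye/meld pairings: an $R$-eye with a $KB$-meld of cost $1+3$ needing $km=1$, and a $KB$-eye with an $R$-meld of cost $2+2$ needing $ke=1$; since $\max(ke,km)=1$, at least one is available at cost $4$ and none is cheaper, so the total is $n+4\ge 4$. (Were $\max(ke,km)=0$ here, Lemma~\ref{lem:incompletable}(d) would declare $\pi$ incompletable, which is exactly why this hypothesis appears.)

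The main obstacle is case (d), where $e=0$ but $p\ge 1$, because the true cost depends on fine information, namely how many remainder tiles can simultaneously seed melds, that the type does not record; this is why the claim is only a lower bound. Here any completion falls into one of two families. If the eye is built from scratch, the cost is at least $n+mcost+ecost=f_1$, since the $n$ pmelds, one extra meld, and the eye each cost at least their stated minima. If instead one of the $p$ pairs serves as the eye for free, the remaining $n-1$ pmelds cost $n-1$ and leave $m+(n-1)=2$ melds, so two further melds are required; crucially, each single meld costs at least $mcost$, hence the two cost at least $2\,mcost$, giving at least $f_2=n-1+2\,mcost$. As every strategy belongs to one family, $\cost{\pi,KB}\ge\min(f_1,f_2)$. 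Finally I would verify the numerical bound: using $n\ge p\ge 1$, $mcost\ge 2$, and $ecost\ge 1$ gives $f_1\ge 4$ and $f_2\ge 4$, so $\min(f_1,f_2)\ge 4$.
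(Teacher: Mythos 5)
Your proposal is correct and follows essentially the same route as the paper's proof: you account for costs via the completion procedure with the exact unit costs $mcost$ and $ecost$, force the incompletable pair as the eye in (a), use $p=0$ with $\max(p,re,ke)>0$ to make the eye from scratch in (b), enumerate the two cost-4 eye/meld pairings under $\max(ke,km)=1$ in (c), and split (d) into the same two families (eye from scratch versus pair-as-eye with two fresh melds) that the paper uses. The only difference is cosmetic: in (d) you give direct per-family lower bounds $f_1$ and $f_2$, where the paper reaches the same conclusion by sweeping the subcases of $(re,ke,rm)$.
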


When $m+n-e\leq 2$, we can analyse as above. In order to get the exact cost, we need to introduce new attributes and consider many more subcases. This is in general not necessary as the cost of a $\qd$ in this case is at least 4, which is quite bad and usually of little use as it indicates that either (i) the {\qd} is not a good one and may be replaced with a better one, or (ii) the hand is pretty bad, and, if this is in the late stage of the game, we  may have no chance to complete the hand. 
\begin{lemma}
Let $\pi$ be a {\qd} of $H$ under $KB$ and  $(m,n,p,e,re,rm,em)$ the type of $\pi$. If $m=0$ or $m+n-e\leq 2$, then the cost of $\pi$ is at least 4. 
\end{lemma}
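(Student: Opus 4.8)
The plan is to bound $\cost{\pi,KB}$ from below by $9-2m-n$ and then invoke Lemma~\ref{lem:7attr} to show $2m+n\leq 5$ under either hypothesis, which forces the cost to be at least $4$. If $\pi$ is incompletable its cost is $\infty$ and there is nothing to prove, so I may assume a completion $\pi^{*}$ exists and analyse it.

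The first step is to count, slot by slot, how $\pi^{*}$ is assembled, using that the cost equals the number of tiles borrowed from $KB$. Each of the $m$ melds of $\pi$ fills one of the four meld slots at cost $0$, and, since the completion procedure completes every pmeld that is not used as the eye, each of the $n$ pmelds of $\pi$ fills exactly one slot: a pmeld completed into a meld costs $1$, while a pair taken as the eye costs $0$. Thus the $m+n$ units of $\pi$ occupy $m+n$ of the five slots, and the remaining $5-(m+n)$ slots must be created from scratch; this number is non-negative because $m+n\leq 5$ by Lemma~\ref{lem:7attr}. A meld created from scratch costs at least $2$ (one recycled tile plus two from $KB$, or three from $KB$) and an eye created from scratch costs at least $1$.

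The second step is a case split on the eye slot, which is necessarily a pair coming either from one of the pmelds of $\pi$ or from scratch. If the eye is a pmeld pair --- which is forced when $e=1$, as an incompletable pair can serve only as the eye --- then the remaining $n-1$ pmelds are completed and the $5-m-n$ scratch slots are all melds, so
\[
\cost{\pi,KB}\;\geq\;(n-1)+2\,(5-m-n)\;=\;9-2m-n.
\]
If instead the eye is built from scratch --- which requires $e=0$ and, since $m+n=5$ forces $p>0$ by Lemma~\ref{lem:7attr}, also $m+n\leq 4$ --- then all $n$ pmelds become melds, there are $4-m-n$ scratch melds and one scratch eye, so
\[
\cost{\pi,KB}\;\geq\;n+2\,(4-m-n)+1\;=\;9-2m-n.
\]
As these two possibilities exhaust all completions, $\cost{\pi,KB}\geq 9-2m-n$ in every case.

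The last step is to check $2m+n\leq 5$. If $m=0$ then $2m+n=n\leq m+n\leq 5$. If $m+n-e\leq 2$, then from $e\leq p\leq n$ (Lemma~\ref{lem:7attr}) we get $m\leq m+n-e\leq 2$; together with $e\leq 1$ this gives $m+e\leq 3$ and hence $2m+n=(m+n-e)+(m+e)\leq 2+3=5$. Either way $\cost{\pi,KB}\geq 9-5=4$, as required. The only real obstacle is bookkeeping: one must verify that every pmeld occupies exactly one slot (so that abandoning a pmeld is never cheaper than completing it) and that an incompletable pair is forced into the eye slot, so that the two displayed inequalities genuinely cover all admissible completions; note that the eye--meld conflict recorded by $em$ can only raise a scratch cost and so never violates the lower bound.
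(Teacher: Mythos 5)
Your proof is correct, and it takes a genuinely different route from the paper's. The paper never proves this lemma in isolation: it is offered as a summary of the surrounding case analysis (``we can analyse as above''), with the actual bounds supplied by the case-wise formulas of Lemmas~\ref{lem:m+n>=4}, \ref{lem:cost3} and~\ref{lem:cost>4}, built from $mcost\in\{2,3\}$ and $ecost\in\{1,2\}$ and split on $e=1$, $p=0$, $p>e=0$; note that Lemma~\ref{lem:cost>4} only covers $m+n-e\leq 2$, so the $m=0$ hypothesis with $m+n-e\geq 3$ has to be chased through Lemmas~\ref{lem:m+n>=4} and~\ref{lem:cost3} separately. You instead prove one uniform inequality $\cost{\pi,KB}\geq 9-2m-n$ by counting the five slots of any completion --- eye taken from a pair of $\pi$ versus eye from scratch, with scratch melds costing at least $2$ and a scratch eye at least $1$ --- and then close with the arithmetic $2m+n=(m+n-e)+(m+e)\leq 5$ under either hypothesis, using $e\leq 1$, $e\leq n$ and $m+n\leq 5$ from Lemma~\ref{lem:7attr}. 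Your bookkeeping checks out against the paper's cost procedure: step (a) does force an incompletable pair into the eye slot, step (b) completes every remaining pmeld (so a pmeld is never abandoned and fills exactly one slot at cost $1$, or $0$ as the eye), the scratch-eye case indeed entails $e=0$ and $m+n\leq 4$ (otherwise the completion would need five melds), and the eye--meld conflict $em$ can only increase cost, so it never threatens a lower bound. The bound is even tight: $m=2$, $n=p=e=1$, $rm=1$ gives cost exactly $4=9-2m-n$. The trade-off is clear: your argument is shorter, self-contained, and treats $m=0$ and $m+n-e\leq 2$ uniformly, but it yields only a lower bound; the paper's heavier case analysis produces the exact cost values in each regime, which is what Alg.~\ref{alg:decide} must return, so the lemma falls out there as a byproduct rather than being the target.
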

When $m+ n-e \leq 2$, similar to Lemma~\ref{lem:cost2}, we have the following result.
\begin{lemma}\label{lem:cost>4}
Let $\pi$ be a {\qd} of $H$ under $KB$ and  $(m,n,p,e,re,rm,em)$ the type of $\pi$. Suppose $m+ n-e \leq 2$. Let $mcost=2\times rm+3\times(1-rm)$ and $ecost=re+2\times(1-re)$. 
\begin{itemize}
    \item [\rm(a)] Suppose $e=1$. The cost of $\pi$ is at least $n-1 + mcost\times (4-m-n+1) \geq 4$.
    
    \item [\rm(b)]  Suppose $p=0$. The cost of $\pi$ is at least $n + mcost\times (4-m-n) + ecost + em \geq 5$.
    
    \item [\rm(c)]  If $p>e=0$, then the cost of $\pi$ is at least $\min(f_1,f_2)\geq  6$, where $f_1 \define n + mcost\times (4-n-m) + ecost$ and $f_2 \define n-1 +  mcost\times (4-m-n+1)$.
\end{itemize}
\end{lemma}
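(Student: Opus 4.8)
The plan is to compute, or rather lower-bound, $\cost{\pi,KB}$ by running the completion procedure of Steps (a)--(c) and charging each operation according to the attribute bookkeeping, exactly as in the proof of Lemma~\ref{lem:cost3}, but now exploiting the stronger hypothesis $m+n-e\leq 2$, which forces at least two melds to be built from scratch. Throughout I use four elementary sub-costs that follow directly from the definition of $\cost{\pi,KB}$: completing a pchow into a chow costs $1$; completing a non-eye pair into a pong costs $1$; building one meld from scratch costs at least $mcost$; and building the eye from scratch costs at least $ecost$. By definition $mcost$ and $ecost$ are the cheapest admissible single-object constructions, so they serve as per-object lower bounds.

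Before the case split I would settle the one genuinely non-routine estimate, namely that $mcost\cdot k$ lower-bounds the cost of building $k$ melds from scratch even though at most one of them can recycle a remainder tile. If $rm=0$, then no remainder tile evolves into a meld, so every from-scratch meld must draw three tiles from $KB$ and costs exactly $3=mcost$; hence the total is $3k=mcost\cdot k$. If $rm=1$, then $mcost=2$, and since any meld requires at least two tile changes, each of the $k$ melds costs at least $2=mcost$, so the total is at least $2k=mcost\cdot k$. The same dichotomy makes $ecost$ a valid per-eye lower bound. Finally, when the eye and a meld both have to be built from scratch and $em=1$, a single recyclable remainder tile cannot serve both, so one of the two must fall back on $KB$, adding the extra unit recorded by $+em$; this is precisely the eye--meld conflict of the attribute definition.

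With these sub-bounds the three cases are short. For (a), since $e=1$ the incompletable pair cannot be completed and is forced to be the eye at cost $0$ in Step~(a); the remaining $n-1$ pmelds are completed at cost $n-1$, and since only $m+(n-1)$ melds result while four are needed, exactly $4-m-n+1$ melds must be built from scratch, giving $\cost{\pi,KB}\geq n-1+mcost\,(4-m-n+1)$; from $m+n\leq 3$ and $mcost\geq 2$ the second term is at least $4$, so the total is $\geq 4$. For (b), $e=0$ and $p=0$ mean $\pi$ has no pair, so the eye is built from scratch (cost $\geq ecost$), the $n$ pchows are completed (cost $n$), and $4-m-n$ melds are built from scratch (cost $\geq mcost\,(4-m-n)$), with the conflict term $+em$; using $m+n\leq 2$, $mcost\geq 2$, $ecost\geq 1$ gives a value $\geq 5$. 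For (c), with $p>0=e$ there are two competing strategies: either create the eye from scratch and complete all $n$ pmelds, yielding $f_1=n+mcost\,(4-m-n)+ecost$, or designate one completable pair as the eye and complete the other $n-1$ pmelds, which needs one extra from-scratch meld and yields $f_2=n-1+mcost\,(4-m-n+1)$; the true cost is at least $\min(f_1,f_2)$, and since $n\geq p\geq 1$ and $m+n\leq 2$ both $f_1$ and $f_2$ evaluate to at least $6$.

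The main obstacle is the per-object lower-bounding argument of the second paragraph: I must argue carefully that the Boolean attributes $rm,re,em$ — each of which records only the \emph{existence} of a single recyclable remainder tile, not a multiplicity — still yield a correct lower bound when the completion requires two or more from-scratch melds together with a from-scratch eye all competing for that single tile. The incompletable sub-cases (for instance $rm=km=0$ while melds are still required) need only a one-line remark that the cost is then infinite and the stated inequalities hold vacuously; everything after the sub-bounds are fixed is routine arithmetic in $m,n$ under $m+n-e\leq 2$.
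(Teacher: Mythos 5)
Your proof is correct and takes essentially the same approach as the paper: the paper gives no separate proof of this lemma, saying only that it is ``similar to Lemma~\ref{lem:cost3},'' and your argument is precisely that proof's bookkeeping (the per-object $mcost$/$ecost$ lower bounds, the $+em$ conflict term, and the $f_1$-versus-$f_2$ split between creating the eye from scratch and promoting a pair in $\pi$ to the eye) adapted to the regime $m+n-e\leq 2$. Your explicit verification that $mcost\times k$ remains a valid lower bound when $k\geq 2$ melds must be built from scratch — despite $rm$ recording only the existence, not the multiplicity, of recyclable remainder tiles — correctly fills in a step the paper leaves implicit (it already uses $2\times mcost$ without comment in case (d) of Lemma~\ref{lem:cost3}).
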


\begin{myexpcont}\label{ex:running_example}
Recall
\begin{align*}
    H &= (\cdot)(C1C4C6C7C8C9)(D1D2D3D6D6D7D8)\\
    KB & = (001100121)(010000030)(032242321).
\end{align*}    
As $KB$ contains pairs and melds, we have $ke=km=1$.
Consider the {\qd} $\pi\define(C6C7C8)(D1D2D3)(D6D7D8)$ with remainder $R\define(C1C4C9D6)$. It is easy to check that $\pi$ has type $(3,0,0,0,1,1,1)$. Since $ke=km=1$, by Lemma~\ref{lem:cost3} (c), the cost of $\pi$ is 4. After examining all {\qd}s of $H$, we know this is the minimal cost and thus $H$ has deficiency 4 under $KB$, i.e., $\dfncy(H,KB)=4$. 
\end{myexpcont}

We integrate the above results in Alg.~\ref{alg:decide} to determine the cost of a {\qd}.

\begin{algorithm}
\caption{$Decide(m,n,p,e,re,rm,em,ke,km)$}
\begin{algorithmic}[1]
\Require{A {\qd} type $\vec{x}=(m,n,p,e,re,rm,em)$ and $ke,km$.}
\Ensure{The cost of $\vec{x}$.}

\IIf{$p=re=ke=0$}{\ \Return 100} \EndIIf 
\IIf{$m+n\leq 4$ and $re=ke=rm=km=0$}{\ \Return 100} \EndIIf
\IIf{$m+n-e\leq 3$ and $rm=km=0$}{\ \Return 100} \EndIIf 
\IIf{$m+n\leq 3$, $p=ke=km=0$, and $em=1$}{\ \Return 100} \EndIIf

\If{$m+n\geq 4$}
    \If{$m+n>4$}
        \State \Return $4-m$ 
    \Else
        \If{($e=0$ and $re=1$) or ($p>0$ and $rm=1$)}
            \State \Return $4-m+1$
        \Else 
            \State \Return $4-m+2$
        \EndIf
    \EndIf 
\EndIf 
\If{$m+n-e\leq 3$}
    \State  $mcost \leftarrow  2\times rm + 3\times (1-rm)$ 
    \State $ecost \leftarrow re + 2\times (1-re)$
    \If{$e=1$}
        \State \Return $(n-1) + mcost\times (4-m-n+1)$ 
    \Else
        \If{$p=0$} \Comment{$p=e=0$}
                \State \Return $n + mcost\times (4-m-n) + ecost + em$
        \Else \Comment{$p>e=0$}
            \State  $f1 \leftarrow n + mcost\times (4-m-n) + ecost$ 
            \State $f2 \leftarrow n -1 + mcost\times (4-m-n+1)$ 
            \State \Return $\min(f1,f2)$
        \EndIf
    \EndIf 
\EndIf 
\end{algorithmic}
\label{alg:decide}
\end{algorithm}

\section{The Block Deficiency Algorithm}
\label{sec:block_dfncy}

In the quadtree algorithm, the hand $H$ is considered as a whole and we generate, in principle,  all possible {\pd}s of $H$ and evaluate their costs one by one. This procedure can be significantly sped up by dividing the hand into blocks, generating the local quasi-decompositions and mapping them to small sets of local types and then merging these local types and evaluating their costs. To describe this block-based algorithm, we first introduce the notion of block.

\begin{dfn}[block]\label{dfn:block}
A block of a $k$-tile $H$ is a subsequence $b$ of $H$ s.t.
\begin{itemize}
    \item $b\not=\varnothing$  and all tiles in $b$ have the same colour;
    \item If $t$ is a tile in $b$, then any tile in $H$ that is connected to $t$ is also in $b$,
 \end{itemize}
 where two tiles $t=(c,n)$, $t'=(c',n')$ are connected if $c=c'$ and $|n-n'|\leq 2$.
\end{dfn}

Analogously to deficiency, blocks can also be knowledge-aware.
\begin{dfn}[knowledge-aware block]\label{dfn:block}
Given a $k$-tile $H$ and a knowledge base $KB$, a subsequence $b$ of $H$ is a $KB$-block if 
\begin{itemize}
    \item $b\not=\varnothing$  and all tiles in $b$ have the same colour;
    \item If $t$ is a tile in $b$, then any tile in $H$ that is $KB$-connected to $t$ is also in $b$,
 \end{itemize}
 where two tiles $t=(c,n)$ and $t'=(c',n')$ are $KB$-connected if (i) $t=t'$, or (ii)  either $H$ or $KB$ has a tile $t''$ s.t. $(tt't'')$ is a chow.
\end{dfn}
It is easy to see that there are blocks that are not $KB$-blocks and vice versa. In this paper, we are mainly concerned with $KB$-blocks. 

In this section, we use Example~\ref{ex:running_example} as our running example.

\begin{myexpcont}\label{ex:running_example}
Clearly, $H$ has the following $KB$-blocks
\begin{align*} 
b_1\define (C1), b_2\define (C4), b_3\define (C6C7C8C9), b_4\define (D1D2D3), b_5\define (D6D6D7D8).
\end{align*}
We note that $C4$ is not $KB$-connected to $C6$ as neither $H$ nor $KB$ contains a tile $C5$. All tiles in a $KB$-block, say $b_3=(C6C7C8C9)$, have the same colour. For simplicity, we write it as a list of integers, say $(6789)$, and write, say, $KB_1=(010000030)$ for the restriction of $KB$ to the Character suit that contains $b$.

The first block $b_1=(C1)$ has only the empty {\qd} $\pi_1=(\cdot)$, with remainder $R_1=(C1)$ and local knowledge base $KB_1=(010000030)$. As we cannot develop $C1$ into a pair or a meld, we have $re_1=rm_1=em_1=0$ and thus $\sigma(\pi_1)=(0,0,0,0,0,0,0)$. 
The second block $b_2=(C4)$ is similar. It has the empty {\qd} $\pi_2$ with type $(0,0,0,0,0,0,0)$. The third block $b_3=(C6C7C8C9)$ also has the  local knowledge base $KB_1$ in the Character suit.  For the {\qd} $\pi_3=((C6C7C8))$ of $b_3$, as its remainder is $(C9)$, we have $m_3=1$, $n_3=p_3=e_3=0$, and $re_3=rm_3=em_3=0$. 
Thus the type of $\pi_3$ is $(1,0,0,0,0,0,0)$.
Furthermore, consider the blocks $b_4=(D1D2D3)$ and $b_5=(D6D6D7D8)$ in the Dot suit, with the local knowledge base $KB_2=(032242321)$. Clearly, $\pi_4=((D1D2D3))$ and $\pi_5=((D6D7D8))$ are, respectively,  {\qd}s of $b_4$ and $b_5$ under $KB_2$. The type of $\pi_4$ is $(1,0,0,0,0,0,0)$. Let $(m_5,n_5,p_5,e_5,re_5,rm_5,em_5)$ be the type of $\pi_5$. As the remainder of $\pi_5$ is $(D6)$, we have $m_5=2$, $n_5=p_5=e_5=0$, $re_5=rm_5=1$. Since the remainder contains a single tile, we cannot make the eye and a meld from the remainder $(D6)$ simultaneously, i.e., we have an eye-meld conflict and $em_5=1$. Thus the type of $\pi_5$ is $(2,0,0,0,1,1,1)$.
\end{myexpcont}

\subsection{Description of the  new algorithm}

\begin{figure}[htb]
\begin{center}
\scalebox{0.75}{
\tikzset{every picture/.style={line width=0.75pt}} 

\begin{tikzpicture}[x=0.75pt,y=0.75pt,yscale=-1,xscale=1]

\draw   (71,63) -- (141,63) -- (141,106) -- (71,106) -- cycle ;
\draw   (70,127) -- (140,127) -- (140,170) -- (70,170) -- cycle ;
\draw   (71,290) -- (141,290) -- (141,333) -- (71,333) -- cycle ;
\draw   (152,70) -- (182,85) -- (152,100) -- (167,85) -- cycle ;
\draw   (153,132) -- (183,147) -- (153,162) -- (168,147) -- cycle ;
\draw   (154,298) -- (184,313) -- (154,328) -- (169,313) -- cycle ;
\draw   (313,70) -- (343,85) -- (313,100) -- (328,85) -- cycle ;
\draw   (372,59) .. controls (367.33,58.91) and (364.95,61.19) .. (364.86,65.86) -- (364.67,75.32) .. controls (364.54,81.98) and (362.14,85.26) .. (357.47,85.17) .. controls (362.14,85.26) and (364.4,88.64) .. (364.27,95.31)(364.33,92.31) -- (364.14,101.86) .. controls (364.05,106.53) and (366.33,108.91) .. (371,109) ;
\draw   (417,109) .. controls (421.67,109) and (424,106.67) .. (424,102) -- (424,95) .. controls (424,88.33) and (426.33,85) .. (431,85) .. controls (426.33,85) and (424,81.67) .. (424,75)(424,78) -- (424,67) .. controls (424,62.33) and (421.67,60) .. (417,60) ;
\draw   (70,206) -- (140,206) -- (140,249) -- (70,249) -- cycle ;
\draw   (153,214) -- (183,229) -- (153,244) -- (168,229) -- cycle ;
\draw   (530.15,163) .. controls (525.48,162.87) and (523.09,165.14) .. (522.96,169.81) -- (522.55,185.06) .. controls (522.37,191.73) and (519.95,195) .. (515.28,194.87) .. controls (519.95,195) and (522.19,198.39) .. (522.01,205.06)(522.09,202.06) -- (521.73,215.62) .. controls (521.6,220.28) and (523.87,222.67) .. (528.53,222.8) ;
\draw   (581.96,224) .. controls (586.63,224) and (588.96,221.67) .. (588.96,217) -- (588.96,205.52) .. controls (588.96,198.85) and (591.29,195.52) .. (595.96,195.52) .. controls (591.29,195.52) and (588.96,192.19) .. (588.96,185.52)(588.96,188.52) -- (588.96,172.4) .. controls (588.96,167.73) and (586.63,165.4) .. (581.96,165.4) ;
\draw  [color={rgb, 255:red, 189; green, 16; blue, 224 }  ,draw opacity=1 ] (550.5,59) -- (600,85.5) -- (550.5,112) -- (501,85.5) -- cycle ;

\draw   (539.31,151.55) -- (539.52,136.3) -- (528.29,136.15) -- (551.04,114.84) -- (573.2,136.75) -- (561.98,136.6) -- (561.77,151.85) -- cycle ;
\draw   (312,133) -- (342,148) -- (312,163) -- (327,148) -- cycle ;
\draw   (371,122) .. controls (366.33,121.91) and (363.95,124.19) .. (363.86,128.86) -- (363.67,138.32) .. controls (363.54,144.98) and (361.14,148.26) .. (356.47,148.17) .. controls (361.14,148.26) and (363.4,151.64) .. (363.27,158.31)(363.33,155.31) -- (363.14,164.86) .. controls (363.05,169.53) and (365.33,171.91) .. (370,172) ;
\draw   (418,172) .. controls (422.67,172) and (425,169.67) .. (425,165) -- (425,158) .. controls (425,151.33) and (427.33,148) .. (432,148) .. controls (427.33,148) and (425,144.67) .. (425,138)(425,141) -- (425,130) .. controls (425,125.33) and (422.67,123) .. (418,123) ;
\draw   (315,204) -- (345,219) -- (315,234) -- (330,219) -- cycle ;
\draw   (374,193) .. controls (369.33,192.91) and (366.95,195.19) .. (366.86,199.86) -- (366.67,209.32) .. controls (366.54,215.98) and (364.14,219.26) .. (359.47,219.17) .. controls (364.14,219.26) and (366.4,222.64) .. (366.27,229.31)(366.33,226.31) -- (366.14,235.86) .. controls (366.05,240.53) and (368.33,242.91) .. (373,243) ;
\draw   (419,243) .. controls (423.67,243) and (426,240.67) .. (426,236) -- (426,229) .. controls (426,222.33) and (428.33,219) .. (433,219) .. controls (428.33,219) and (426,215.67) .. (426,209)(426,212) -- (426,201) .. controls (426,196.33) and (423.67,194) .. (419,194) ;
\draw   (315,298) -- (345,313) -- (315,328) -- (330,313) -- cycle ;
\draw   (374,287) .. controls (369.33,286.91) and (366.95,289.19) .. (366.86,293.86) -- (366.67,303.32) .. controls (366.54,309.98) and (364.14,313.26) .. (359.47,313.17) .. controls (364.14,313.26) and (366.4,316.64) .. (366.27,323.31)(366.33,320.31) -- (366.14,329.86) .. controls (366.05,334.53) and (368.33,336.91) .. (373,337) ;
\draw   (417,336) .. controls (421.67,336) and (424,333.67) .. (424,329) -- (424,322) .. controls (424,315.33) and (426.33,312) .. (431,312) .. controls (426.33,312) and (424,308.67) .. (424,302)(424,305) -- (424,294) .. controls (424,289.33) and (421.67,287) .. (417,287) ;
\draw  [color={rgb, 255:red, 248; green, 231; blue, 28 }  ,draw opacity=1 ] (351,44) -- (444,44) -- (444,167.5) -- (463.7,167.5) -- (463.7,132.98) -- (512,196) -- (463.7,259.02) -- (463.7,224.5) -- (444,224.5) -- (444,348) -- (351,348) -- cycle ;

\draw (532,78) node [anchor=north west][inner sep=0.75pt]   [align=left] {decide};
\draw (101,172) node [anchor=north west][inner sep=0.75pt]    {$\vdots $};
\draw (84,365) node [anchor=north west][inner sep=0.75pt]   [align=left] {$KB$-blocks};
\draw (187,68.4) node [anchor=north west][inner sep=0.75pt]    {$\Bigl\{{\qd}s\ of\ b_{1}\Bigr\}$};
\draw (100,79.4) node [anchor=north west][inner sep=0.75pt]    {$b_{1}$};
\draw (190,131.4) node [anchor=north west][inner sep=0.75pt]    {$\Bigl\{{\qd}s\ of\ b_{2}\Bigr\}$};
\draw (190,296.4) node [anchor=north west][inner sep=0.75pt]    {$\Bigl\{{\qd}s\ of\ b_{\ell }\Bigr\}$};
\draw (96,136.4) node [anchor=north west][inner sep=0.75pt]    {$b_{2}$};
\draw (98,302.4) node [anchor=north west][inner sep=0.75pt]    {$b_{\ell }$};
\draw (161,172) node [anchor=north west][inner sep=0.75pt]    {$\vdots $};
\draw (239,172) node [anchor=north west][inner sep=0.75pt]    {$\vdots $};
\draw (190,217.4) node [anchor=north west][inner sep=0.75pt]    {$\Bigl\{{\qd}s\ of\ b_{s}\Bigr\}$};

\draw (321,172) node [anchor=north west][inner sep=0.75pt]    {$\vdots $};
\draw (390,172) node [anchor=north west][inner sep=0.75pt]    {$\vdots $};
\draw (97,218.4) node [anchor=north west][inner sep=0.75pt]    {$b_{s}$};
\draw (102,255) node [anchor=north west][inner sep=0.75pt]    {$\vdots $};
\draw (158,255) node [anchor=north west][inner sep=0.75pt]    {$\vdots $};
\draw (235,255) node [anchor=north west][inner sep=0.75pt]    {$\vdots $};
\draw (193,356) node [anchor=north west][inner sep=0.75pt]   [align=left] {\begin{minipage}[lt]{42.98pt}\setlength\topsep{0pt}
\begin{center}
{local\\\qd}s
\end{center}

\end{minipage}};
\draw (356,354) node [anchor=north west][inner sep=0.75pt]   [align=left] {\begin{minipage}[lt]{62.25pt}\setlength\topsep{0pt}
\begin{center}
types of \\local {\qd}s
\end{center}

\end{minipage}};
\draw (533,354) node [anchor=north west][inner sep=0.75pt]   [align=left] {\begin{minipage}[lt]{29.94pt}\setlength\topsep{0pt}
\begin{center}
types of $\displaystyle T$
\end{center}

\end{minipage}};
\draw (378,65.4) node [anchor=north west][inner sep=0.75pt]    {$ \begin{array}{l}
types\ \\
of\ b_{1}
\end{array}$};
\draw (378,128.4) node [anchor=north west][inner sep=0.75pt]    {$ \begin{array}{l}
types\ \\
of\ b_{2}
\end{array}$};
\draw (378,200.4) node [anchor=north west][inner sep=0.75pt]    {$ \begin{array}{l}
types\ \\
of\ b_{s}
\end{array}$};
\draw (379,293.4) node [anchor=north west][inner sep=0.75pt]    {$ \begin{array}{l}
types\ \\
of\ b_{\ell }
\end{array}$};
\draw (321,255) node [anchor=north west][inner sep=0.75pt]    {$\vdots $};
\draw (389,255) node [anchor=north west][inner sep=0.75pt]    {$\vdots $};
\draw (536.67,176.5) node [anchor=north west][inner sep=0.75pt]    {$ \begin{array}{l}
{\displaystyle \mathnormal{types\ }}\\
{\displaystyle \mathnormal{\ of\ T}}
\end{array}$};
\draw (458,187) node [anchor=north west][inner sep=0.75pt]   [align=left] {JOIN};

\end{tikzpicture}
}
\end{center}
\caption{Illustration of the block deficiency algorithm.}\label{fig:block}
\end{figure}
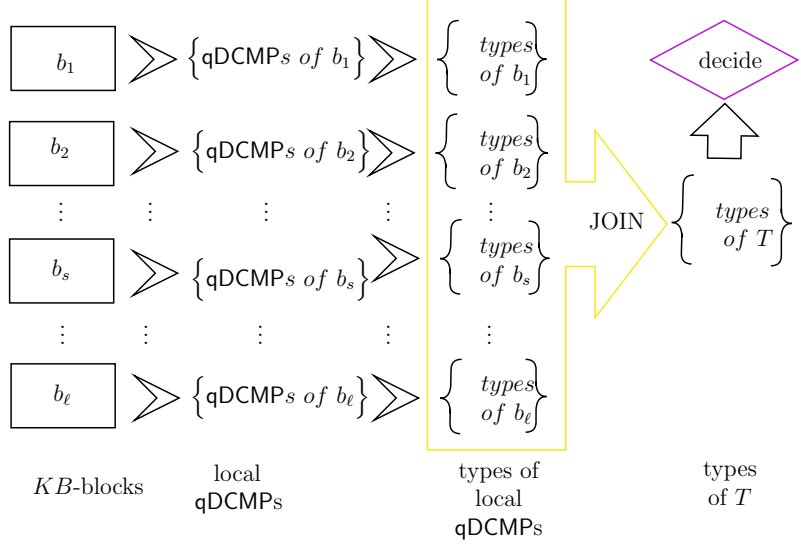

The idea is to first partition the hand $H$ into pairwise disconnected $KB$-blocks. Write $\mathcal{B}$ for the set of $KB$-blocks of $H$. For each $KB$-block $b$, we generate all its {\qd}s and call them \emph{local} {\qd}s at $b$. Instead of amalgamating these local {\qd}s directly (and form global {\qd}s), we calculate and join their \emph{types}. Here the type of a local {\qd} of $KB$-block $b$ is a 7-tuple $\vec{x}_b\define (m_b,n_b,p_b,e_b,re_b,rm_b,em_b)$ (cf. Definition~\ref{dfn:type}). Write $TypeSet_b$ for the set of types of all local {\qd}s of block $b$. We join these local types as follows. Let $\vec{x}_b$ be a type in $TypeSet_b$ for each $KB$-block $b$.  The global type  $\vec{x}\define (m,n,p,e,re,rm,em)$ is defined by setting 
\begin{align}\label{eq:em}
em=1\ \mbox{iff}\ (\exists b\in\mathcal{B}) \big(em_b=1 \wedge  (\forall b'\in\mathcal{B}) (b\not=b' \to re_{b'}=rm_{b'}=0) \big)
\end{align}
and setting the other attributes as follows:
\begin{equation}
\vec{x}[j] = \begin{cases}
            \sum_b \vec{x}_b[j] &\text{if  $0\leq j\leq 3$} \\
            \max_b \vec{x}_b[j] & \text{if $4\leq j\leq 5$}
        \end{cases}
\end{equation}
Note that in practice we join these local type sets one by one (see Alg.~\ref{alg:typecombine}), as the combined type set usually have size far less than the product of the sizes of two local type sets. For each global type $\vec{x}$, we determine its cost according to Alg.~\ref{alg:decide}, that is, the minimum number of tiles we should borrow from the knowledge base in order to complete the corresponding quasi-decomposition. Note that we ignore a global type $\vec{x}=(m,n,p,e,re,rm,em)$ if $e>1$ or $m+n>5$, as no legal {\qd} has type like that (cf. Lemma~\ref{lem:7attr}). Taking the minimum of the costs over all global types, we obtain the deficiency of the hand $H$.

Alg.~\ref{alg:block} presents the pseudocode of the block deficiency algorithm.

\begin{algorithm}
\caption{The block deficiency algorithm}
\begin{algorithmic}[1]
\Require{A 14-tile $H$ and a knowledge base $KB$.}
\Ensure{The deficiency of $H$.}
\State $\mathcal{B} \leftarrow \mbox{$KB$-blocks of }  H$

\Comment{Generate the local type set for each $KB$-block $b$}
\For {$b$ in $\mathcal{B}$}
    \State  $TypeSet_b  \leftarrow \varnothing$
    \State $DCMP_b \leftarrow$ all {\qd}s of $b$ \Comment{Analogous to the quadtree method} 
    \For {$\pi_b\in DCMP_b$}
        \State Add the type of $\pi_b$ to $TypeSet_b$
    \EndFor
\EndFor

\Comment{Combine the local type sets and get the global type set }
\State $TypeSet \leftarrow \varnothing$
\For{$b \in \mathcal{B}$}
    
    \If{$TypeSet = \varnothing$}
        \State $TypeSet  \leftarrow TypeSet_b$
    \Else
        \State $TypeSet  \leftarrow TypeSet \bowtie TypeSet_b$ \Comment{See Alg.~\ref{alg:typecombine}}
    \EndIf
\EndFor

\Comment{Evaluate each global type and obtain the deficiency}
\State $dfncy \leftarrow  100$ \Comment{100 is a large upperbound}
\State $ke, km \leftarrow 0, 0$
\IIf{$KB$ has a pair}{\ $ke\leftarrow 1$} \EndIIf 
\IIf{$KB$ has a meld}{\ $km\leftarrow 1$}\EndIIf
\For{$\vec{x} = (m,n,p,e,re,rm,em) \in TypeSet$}
    \State $dfncy' \leftarrow  Decide(m,n,p,e,re,rm,em, ke,km)$ \Comment{See Alg.~\ref{alg:decide}}
    \IIf{$dfncy'=0$}{\ \Return 0} \EndIIf
    \State $dfncy \leftarrow  \min(dfncy,dfncy')$
\EndFor
\State  \Return $dfncy$

\end{algorithmic}
\label{alg:block}
\end{algorithm}

\begin{algorithm}
\caption{Combine two type sets: $TypeSet \bowtie TypeSet'$}
\begin{algorithmic}[1]
\Require{Two type sets $TypeSet$ and $TypeSet'$.}
\Ensure{The combined type set $TypeSetX$.}

\State $TypeSetX \leftarrow \varnothing$
\For{$\vec{x} = (m,n,p,e,re,rm,em) \in TypeSet$}
    \For{$\vec{y} = (m',n',p',e',re',rm',em')  \in TypeSet'$}
    \State $m,n,p,e \leftarrow m+m', n+n', p+ p', e+e'$
    \If{$e>1$ or $m+n>5$ or ($m+n=5$ and $p=0$)} 
        \State \textbf{Continue}
    \Else
        \If{$(em=1,re'=rm'=0)$ or $(em'=1,re=rm=0)$}
            \State $em \leftarrow 1$
        \Else
            \State $em\leftarrow 0$ 
        \EndIf
        \State Add $(m,n,p,e,\max(re,re'),\max(rm, rm'),em)$ to $TypeSetX$
    \EndIf
\EndFor
\EndFor
\State Return $TypeSetX$

\end{algorithmic}
\label{alg:typecombine}
\end{algorithm}


\subsection{The correctness of Alg.~\ref{alg:block}}
To show that Alg.~\ref{alg:block} is correct, we need to show that the type set of all global {\qd}s of $H$ can be obtained by joining types of all local {\qd}s. 

\begin{dfn}[Restriction of {\qd}s]
Suppose $H$ is a hand and $KB$ a knowledge base. Let $b$ be a $KB$-block of $H$. For any {\qd} $\pi$ of $H$, the restriction of $\pi$ to $b$, written $\pi_{\downarrow b}$, is the subset of $\pi$ that includes all melds and pmelds of $\pi$ that are contained in $b$. 
\end{dfn}
Any restriction of a {\qd} to a $KB$-block is also a {\qd}. On the other hand, {\qd}s on different $KB$-blocks can be amalgamated. 
\begin{dfn}[Amalgamation of local {\qd}s]
Suppose $H$ is a hand and $KB$ a knowledge base. Let $\mathcal{B}$ be the set of $KB$-blocks of $H$. For each  $b\in\mathcal{B}$, let $\pi_b$ be a local {\qd} on $b$. The amalgamation $\amalg_{b\in\mathcal{B}}\ \pi_b$ of all $\pi_b$ is the union of all $\pi_b$. 
\end{dfn}
The amalgamation of all $\pi_b$ is a {\qd} on $H$ if (i) it has at most five elements and at most of one of which is an incompletable pair; and (ii) it has at least one pair when it has five elements. 

The following lemma shows that any {\qd} of a hand $H$ can be recovered from its restrictions to the $KB$-blocks of $H$. 
\begin{lemma}
Let $\mathcal{B}$ be the $KB$-blocks of $H$. Suppose  $\pi$ is a {\qd} of $H$. For any $b\in \mathcal{B}$, let $\pi_{\downarrow b}$ be the restriction of $\pi$ to $b$. Then $\pi_{\downarrow b}$ is a {\qd} of $b$. Moreover, $\pi$ is exactly the amalgamation of all $\pi_{\downarrow b}$, i.e.,  $\pi = \amalg_{b\in\mathcal{B}}\ \pi_{\downarrow b}$. 
\end{lemma}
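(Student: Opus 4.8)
The plan is to reduce both assertions to a single structural fact: \emph{every meld and every pmeld occurring in $\pi$ lies entirely inside one $KB$-block}. Granting this, the restriction $\pi_{\downarrow b}$ merely collects those elements of $\pi$ that fall into $b$; since distinct $KB$-blocks are disjoint as multisets and each element of $\pi$ is a nonempty sequence whose tile values all sit in a single block, every element of $\pi$ belongs to exactly one $\pi_{\downarrow b}$. Thus the family $\{\pi_{\downarrow b}\}_{b\in\mathcal{B}}$ partitions the elements of $\pi$, which is precisely the statement $\pi = \amalg_{b\in\mathcal{B}}\pi_{\downarrow b}$, the second assertion.

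First I would prove the structural fact by inspecting each kind of element of $\pi$ and showing that any two of its tiles are $KB$-connected, so that they lie in the same connected component, i.e.\ the same $KB$-block. For a pong or a pair the tiles are identical and hence $KB$-connected by clause~(i); this also covers the at most one incompletable pair permitted by Definition~\ref{dfn:qdcmp}, since a pair's tiles are connected regardless of completability. For a chow $(t\,t^{+}\,t^{++})$, any two of the three tiles are completed into the chow by the third tile, which lies in $H$, so they are $KB$-connected by clause~(ii). For a pchow $(t,t')$, Definition~\ref{dfn:qdcmp} forces it to be completable (the only admissible incompletable pmeld is a pair), so by Definition~\ref{dfn:completable_pmeld} there is a tile $t''$ in $KB$ with $(t\,t'\,t'')$ a chow, whence $t$ and $t'$ are $KB$-connected by clause~(ii). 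Since a $KB$-block is closed under $KB$-connectivity, all tiles of such an element belong to one block $b$; and because all copies in $H$ of any tile value are mutually $KB$-connected (clause~(i)) and hence lie in the same block, $b$ contains every copy in $H$ of each value it meets, so the element is contained in $b$ as a sub-multiset.

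With each element located in a unique block, the first assertion follows by checking that $\pi_{\downarrow b}$ inherits the four conditions of a {\qd} from $\pi$. By construction every element of $\pi_{\downarrow b}$ is a meld, pair, or pchow contained in $b$, and $\pi_{\downarrow b}$ has at most five elements as a subset of $\pi$; its pchows remain completable because completability depends only on the element and $KB$, not on the restriction, and it inherits from $\pi$ the bound of at most one incompletable pair (that pair, if present, lies in exactly one block). The only slightly delicate point is the clause that a five-element {\qd} must contain a pair: if $\pi_{\downarrow b}$ has five elements then, as $\pi$ itself has at most five, we must have $\pi_{\downarrow b}=\pi$, so the pair guaranteed for $\pi$ by Definition~\ref{dfn:qdcmp} also lies in $\pi_{\downarrow b}$.

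I expect the main obstacle to be the pchow case of the structural fact, precisely because it is the step where the knowledge base enters: the two tiles of a pchow are in general not connected through $H$ alone, and one must invoke completability to obtain the bridging tile $t''$ from $KB$. This is exactly why $KB$-blocks, rather than colour-only blocks, are the right partition here---an \emph{incompletable} pchow could straddle numbers belonging to two different $KB$-blocks, but Definition~\ref{dfn:qdcmp} excludes such pchows from {\qd}s. The remaining work is bookkeeping: disjointness of blocks for uniqueness, and the routine inheritance of the {\qd} conditions verified above.
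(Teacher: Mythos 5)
Your proof is correct, and it takes the route the paper intends: the paper states this lemma without an explicit proof, treating it as immediate from the definitions, and your argument supplies exactly the missing justification. In particular, your structural fact---that every meld and pmeld of $\pi$ lies inside a single $KB$-block, with the pchow case resting on the completability forced by Definition~\ref{dfn:qdcmp} (clause (ii) of $KB$-connectedness then bridges via a tile of $KB$)---is precisely the design rationale behind $KB$-blocks, and your handling of the five-element pair clause and of tile multiplicities covers the only delicate bookkeeping.
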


In addition, the type of a global {\qd} is also uniquely determined by the types of its corresponding local {\qd}s. 

\begin{lemma}\label{lem:typejoin}
Suppose $H$ is a hand and $\pi$ a {\qd} on $H$. Let 
$\vec{x}=(m,n,p,e$, $re,rm,em)$ be the type of $\pi$. For each $KB$-block $b$ of $H$, let $\vec{x}_b =(m_b,n_b,p_b,e_b,re_b$, $rm_b,em_b)$ be  the type of $\pi_{\downarrow b}$, the restriction of $\pi$ to $b$. Then we have
\begin{itemize}
    \item $em=1$ iff $e=0$ and there exists $b\in \mathcal{B}$ s.t. $em_b=1$ and $re_{b'}=rm_{b'}=em_{b'}=0$ for any $b'\not=b$.
    \item  $\vec{x}[i]=\sum_b \vec{x}_b[i]$ for $0\leq i \leq 3$ and 
$\vec{x}[i] = \max_b \vec{x}_b[i]$ for $4\leq i\leq 5$. 
\end{itemize}
\end{lemma}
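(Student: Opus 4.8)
The plan is to reduce everything to one structural observation: a meld (chow or pong) or a pmeld (pchow or pair) consists of pairwise $KB$-connected tiles of a single colour, so it lies entirely inside one $KB$-block, and distinct $KB$-blocks are disjoint subsequences of $H$ whose union is all of $H$. Granting this, the restriction $\pi\mapsto(\pi_{\downarrow b})_{b\in\mathcal{B}}$ merely sorts the elements of $\pi$ into their blocks, so $\pi$ is the disjoint union of the $\pi_{\downarrow b}$; and since $H=\bigcup_b b$ and $\bigcup_i\pi[i]=\bigcup_b\bigcup_j\pi_{\downarrow b}[j]$, the global remainder $R$ is the disjoint union of the local remainders $R_b$ of $\pi_{\downarrow b}$ in $b$. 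I would establish this first, as the rest of the argument rests on it.

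The additive attributes then fall out immediately. Being a meld, a pmeld, a pair, or an incompletable pair is an intrinsic property of a single element of $\pi$ together with $KB$ --- in particular a pair $(t,t)$ is incompletable exactly when $KB(t)=0$ --- so counting such elements over $\pi$ is the same as summing the counts over the $\pi_{\downarrow b}$, giving $m=\sum_b m_b$, $n=\sum_b n_b$, $p=\sum_b p_b$, $e=\sum_b e_b$. For $re$ and $rm$ I would note that each is a purely existential property of the remainder, decided tile by tile: $re=1$ iff some tile of $R$ lies in $KB$, and $rm=1$ iff some tile $t$ of $R$ can be grown into a meld using $t$ plus tiles borrowed from $KB$ (a pong needs $KB(t)\ge 2$, a chow needs two of its three tiles in $KB$). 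Because $R=\bigcup_b R_b$, the existential quantifier distributes over blocks, so $re=\max_b re_b$ and $rm=\max_b rm_b$.

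The formula for $em$ is the delicate part and the one I expect to be the main obstacle. Recall $em=1$ exactly when $e=0$, $re=rm=1$, and there is no way to make the eye from one tile of $R$ and a meld from a second, distinct tile of $R$ (recycling one remainder tile for each role and borrowing the rest from $KB$). I would introduce the eye-source set $E=\{b:re_b=1\}$ and the meld-source set $M=\{b:rm_b=1\}$ and argue as follows. The crucial point --- and the reason the $KB$-block partition is the right one --- is that if the eye seed $t_e$ lies in a block $b_e$ and the meld seed $t_m$ in a block $b_m$ with $b_e\neq b_m$, then the $KB$-tiles they borrow can never collide: any collision would exhibit a chow through $t_e$ and $t_m$ (or force $t_e=t_m$), making $t_e$ and $t_m$ $KB$-connected and hence forcing $b_e=b_m$. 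So distinct-block eye and meld constructions are always simultaneously realizable.

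With this in hand the biconditional is short. For the forward direction, assume $em=1$; then $e=0$ (so every $e_b=0$) and $E,M$ are nonempty. If some $b_e\in E$ and $b_m\in M$ were distinct, the eye drawn from $R_{b_e}$ and the meld from $R_{b_m}$ would use distinct remainder tiles and non-colliding $KB$ tiles, contradicting the conflict; hence $E=M=\{b\}$ for a single block $b$, which gives $re_b=rm_b=1$ and $re_{b'}=rm_{b'}=0$ (so also $em_{b'}=0$) for all $b'\neq b$. Since any admissible eye and meld must then both be sourced from $R_b$, global infeasibility coincides with infeasibility inside $R_b$, i.e. $em_b=1$. Conversely, if $e=0$ and some $b$ has $em_b=1$ while $re_{b'}=rm_{b'}=em_{b'}=0$ for $b'\neq b$, then $re_b=rm_b=1$ forces $re=rm=1$ by the max formulas, and because no other block can supply an eye or meld from its remainder, both roles must be filled from $R_b$; the local conflict $em_b=1$ says they cannot be filled disjointly there, so the global conflict holds and $em=1$. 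The only routine check left is that within a single block the eye-then-meld $KB$ bookkeeping used to define $em_b$ is exactly the bookkeeping the global definition performs on $R_b$, which holds because both constructions borrow only tiles of that block's colour from the shared $KB$.
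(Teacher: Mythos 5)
Your proposal is correct, but there is nothing in the paper to compare it against: Lemma~\ref{lem:typejoin} is stated without proof (the appendix proves only Lemmas~\ref{lem:incompletable}, \ref{lem:m+n>=4} and \ref{lem:cost3}), so your argument supplies details the authors left implicit. Your structural reduction is the right one and checks out: a pong or pair consists of identical tiles ($KB$-connected by clause (i)), a chow's tiles witness each other's $KB$-connection inside $H$, and any pchow in a {\qd} must be completable, so its two tiles are $KB$-connected via the completing tile in $KB$; hence every element of $\pi$ sits in a single $KB$-block, $\pi=\amalg_b\,\pi_{\downarrow b}$, and the remainder decomposes as $R=\bigcup_b R_b$. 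The additivity of $m,n,p,e$ and the max-rule for $re,rm$ then follow since these are per-element or per-tile properties relative to the shared $KB$. The genuinely non-trivial step, which you correctly isolate, is the $em$ biconditional, and your collision analysis is exactly the point that makes $KB$-blocks (rather than plain blocks) the correct partition: if the eye seed $t_e$ and meld seed $t_m$ lie in distinct blocks, a shared borrowed tile would force either $t_e=t_m$ (pong case) or a chow through $t_e$ and $t_m$ whose third tile lies in $KB$ (chow case), and either way $t_e$ and $t_m$ would be $KB$-connected, contradicting block-disjointness; so cross-block eye/meld constructions are always jointly realizable, the conflict forces $E=M=\{b\}$, and local and global bookkeeping coincide on $R_b$ because both borrow only tiles of $b$'s colour from the same $KB$. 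Two small points worth noting: your closing ``routine check'' also quietly handles the ordering clause (``or vice versa'') in the definition of $em$, since your non-collision argument is symmetric in which construction goes first; and the global condition $e=0$ in the lemma is not redundant with $em_b=1$ (a block $b'\neq b$ could consist of an incompletable pair with empty remainder, giving $e_{b'}=1$ while $re_{b'}=rm_{b'}=em_{b'}=0$), which your converse direction correctly treats as a standing hypothesis rather than a consequence.
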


\begin{myexpcont}
Joining the four local {\qd}s $\pi_i$ $(1\leq i\leq 4)$ we have a {\qd} for the hand $H$: $\pi=(C6C7C8)(D1D2D3)(D6D7D8)$, with remainder $(C1C4C9D6)$. Let $\sigma(\pi_i) = (m_i,n_i,p_i,e_i,re_i,rm_i,em_i)$ be the type of $\pi_i$ for $1\leq i\leq 5$. These  are $(0,0,0,0,0,0,0)$, $(0,0,0,0,0,0,0)$, $(1,0,0,0,0,0,0)$, $(1,0,0,0,0,0,0)$ and $(1,0,0,0,1,1,1)$. Thus, the joint {\qd} has type   $(3,0,0,0$, $1,1,1)$, which is identical to the result obtained by using Lemma~\ref{lem:typejoin}.
\end{myexpcont}

The correctness of Alg.~\ref{alg:block} then follows directly from the above lemmas.

\begin{theorem}
Suppose $H$ is a hand and $KB$ a knowledge base. Let $block\_dfncy$ be the number returned by Alg.~\ref{alg:block}. Then $block\_dfncy \leq \dfncy(H,KB)$. Moreover, if $\dfncy(H,KB) \leq 4$, then  $block\_dfncy = \dfncy(H,KB)$.
\end{theorem}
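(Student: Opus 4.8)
The plan is to reduce the theorem to three facts and then combine them in a few lines. The three facts are: (i) the set $TypeSet$ produced by Alg.~\ref{alg:block} equals $\{\sigma(\pi) : \pi \text{ is a {\qd} of } H \text{ under } KB\}$; (ii) for every {\qd} $\pi$, the value $Decide(\sigma(\pi))$ (evaluated with the $KB$-attributes $ke,km$) is a lower bound on $\cost{\pi,KB}$, and it equals $\cost{\pi,KB}$ whenever it is at most $3$; and (iii) Theorem~\ref{thm:m-cost-kb}, which identifies $\dfncy(H,KB)$ with $\min_\pi \cost{\pi,KB}$ over all {\qd}s $\pi$.

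For (i) I would prove both inclusions. The forward inclusion uses the restriction lemma: a {\qd} $\pi$ restricts to local {\qd}s $\pi_{\downarrow b}$ on the $KB$-blocks, each $\sigma(\pi_{\downarrow b})\in TypeSet_b$, and Lemma~\ref{lem:typejoin} shows $\sigma(\pi)$ is precisely the join of the $\sigma(\pi_{\downarrow b})$, so $\sigma(\pi)\in TypeSet$. The reverse inclusion uses the amalgamation lemma: I must verify (a) that the incremental pairwise join of Alg.~\ref{alg:typecombine} computes exactly the additive/$\max$/$em$ rule of Lemma~\ref{lem:typejoin}, and (b) that the filter ``$e>1$ or $m+n>5$ or ($m+n=5$ and $p=0$)'' discards precisely the combinations excluded by Lemma~\ref{lem:7attr}, so that each surviving type is realized by an amalgamation of local {\qd}s, which by the amalgamation conditions is a genuine global {\qd}.

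Fact (ii) is obtained by matching the branches of Alg.~\ref{alg:decide} against the cost lemmas (Lemmas~\ref{lem:incompletable}, \ref{lem:m+n>=4}, \ref{lem:cost3}, \ref{lem:cost>4}). The observation I would isolate is that $Decide$ returns the \emph{exact} cost in the determinable regimes ($m+n\ge 4$, or $m+n-e=3$ with $p=0$), while in the remaining regimes (Lemma~\ref{lem:cost3}(d) and all of Lemma~\ref{lem:cost>4}) it returns only a lower bound; crucially, in each of those lower-bound regimes the returned value is at least $4$. Hence $Decide(\sigma(\pi))\le \cost{\pi,KB}$ always, with equality whenever $Decide(\sigma(\pi))\le 3$. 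With (i)--(iii) in place the theorem follows quickly: by (i) and (iii), $block\_dfncy=\min_{\vec{x}\in TypeSet} Decide(\vec{x})=\min_\pi Decide(\sigma(\pi))\le \min_\pi \cost{\pi,KB}=\dfncy(H,KB)$, giving the first claim. For exactness, assume $\dfncy(H,KB)\le 4$. If $block\_dfncy=4$, then $4=block\_dfncy\le \dfncy(H,KB)\le 4$ forces equality. If $block\_dfncy\le 3$, let $\vec{x}_0$ attain the minimum; since $Decide(\vec{x}_0)\le 3$ we are in an exact regime, and by (i) there is a {\qd} $\pi_0$ with $\sigma(\pi_0)=\vec{x}_0$, whence $\dfncy(H,KB)\le \cost{\pi_0,KB}=Decide(\vec{x}_0)=block\_dfncy$; combined with the first claim this yields equality.

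The main obstacle I anticipate is the reverse inclusion of (i), specifically checking that the incremental pairwise join of Alg.~\ref{alg:typecombine} reproduces the global $em$ attribute of Lemma~\ref{lem:typejoin}. Because the global $em=1$ demands a single block carrying the eye--meld conflict while every other block has $re=rm=em=0$, this bit is genuinely non-local, so I would argue by an inductive invariant on the processing order, exploiting the fact that $em_b=1$ forces $re_b=rm_b=1$ to show that the accumulated $em$ bit is maintained correctly when a new block is merged. The cost bookkeeping in (ii) is routine but tedious; the only real care needed there is to confirm that all lower-bound branches of $Decide$ return values $\ge 4$, since that is exactly the property that drives the $block\_dfncy\le 3$ case of the exactness argument.
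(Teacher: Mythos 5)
Your proposal is correct and takes essentially the same route as the paper, which obtains the theorem directly from the restriction/amalgamation lemmas, Lemma~\ref{lem:typejoin}, the cost lemmas (Lemmas~\ref{lem:incompletable}--\ref{lem:cost>4}), and Theorem~\ref{thm:m-cost-kb}; the paper leaves the combination step implicit (``follows directly from the above lemmas''), and your three-fact reduction---in particular the key observation that every inexact branch of $Decide$ returns a value $\geq 4$, which drives the exactness case $block\_dfncy\leq 3$---is precisely the intended argument. The one nuance your plan would surface (and which is harmless) is that the combined $TypeSet$ can carry an $em$ bit differing from the true type when $e\geq 1$, but since Alg.~\ref{alg:decide} only reads $em$ under $p=0$ (hence $e=0$, as $p\geq e$), this does not affect any returned value.
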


\begin{remark}
In above we assume that the hand is a 13- or 14-tile. In practice, a player may also have $0\leq k\leq 4$ consolidated melds (chows, pongs, or kongs) and her hand is then a $(13-3k)$- or $(14-3k)$-tile. When calculating the type of any \qd\ of $H$, we need to increase its $m$-index by $k$. The calculation procedure is similar to the cases when $H$ is a 13- or 14-tile.
\end{remark}



%
\section{Experiments}\label{sec:experiment}

To demonstrate the efficiency of the proposed block deficiency algorithm, we compared it with the quadtree algorithm proposed in \cite{li2019lets} on randomly generated examples of hands and knowledge bases. As mentioned in the end of Sec.~\ref{sec:kb&qd}, the quadtree algorithm is incomplete when knowledge base is considered. To remedy this, we refined the quadtree algorithm by expanding the alphabet $\Sigma=\{1,2,3,4\}$ to $\{1,2,3,4,5,6\}$ and allow each node has up to six child nodes. We implemented both algorithms in Python3 and all experiments were executed on a laptop with Intel i7-6700 CPU and 16 GB RAM.

The colour set of a hand is the set of colours that tiles in the hand have. For example, the colour set of $H=(B1B1B1B3B5B6B8)(D1D2D3D5D5D5)$ is $\{B,D\}$. 
We say a hand is a pure hand if its colour set is a singleton. According to the number of different colours in the hand, we generate three experiment datasets, each contains 1000 random hands, and for each hand $H$, we randomly generate 30 or 100 knowledge bases such that they are compatible with $H$ (i.e., no tile has five or more identical copies) and their sizes (i.e., number of tiles in the knowledge base) follow a normal distribution. 



\subsection{Pure hands}
When examining pure hands, only tiles with the same colour in the knowledge base are concerned. As there are only 36 tiles with a given colour, they are at most 22 tiles in the knowledge base. Following a normal distribution, we generated 30 knowledge bases for each of the 1000 randomly  generated pure hand. The experiments are summarised in Table~\ref{tab:1}, where column 1 gives the range, say, [5,10), in which the number of tiles in KB is,  column 2 shows the number of hand-KB pairs whose KB has size located in the range specified by column 1. Note we use ms (millisecond) as the base time unit. Columns 3-6 show the maximum or average time (in ms) of the quadtree and block deficiency algorithms. The last column shows the ratio of the average time of the quadtree algorithm to that of the block deficiency algorithm. 

\begin{table}
\begin{center}
\caption{\label{tab:1} Experimental results on  pure hands}
\scalebox{0.75}{
\begin{tabular}{|c| c| c| c| c| c| c| c} \hline
\begin{tabular}{c} 
num.\\ tiles\\ in KB
\end{tabular}
& 
\begin{tabular}{c} 
num.\\ (hand,KB)\\ pairs
\end{tabular}
& 
\begin{tabular}{c} 
quadtree\\ max. calc. \\
time (ms)
\end{tabular} 
& 
\begin{tabular}{c} 
block\\ max. calc. \\ time (ms)
\end{tabular} 
&
\begin{tabular}{c} 
quadtree\\ avg. calc. \\ time (ms)
\end{tabular} 
&
\begin{tabular}{c} 
block\\ avg. calc. \\ time (ms)
\end{tabular} 
& avg. ratio\\

\hline   [0, 5)  &  4767    & $1279.2$ & $78.2$ & $86.3$ & $13.1$  & $6.6$ \\
\hline   [5, 10)  &  8923  & $5133.3$ & $110.6$ & $604.8$ & $29.6$  & $20.4$ \\
\hline   [10, 15)  &  9635  & $7280.3$ & $112.7$ & $1319.3$ & $40.1$  & $32.9$  \\
\hline   [15, 22]  &  6675  & $9344.9$ & $115.8$ & $2197.5$ & $44.8$  & $49.0$   \\
\hline [0,22]&   30,000    & $9344.9$ & $115.8$     & $1106.3$ & $33.7$  & $32.8$  \\
\hline
\end{tabular}
}
\end{center}
\end{table}

\subsection{Hands with two colours}
When examining  hands with two colours, only tiles with one of the two colours in the knowledge base are concerned. As there are only 72 tiles with one of the two given colours, they are at most 58 tiles in the knowledge base. Following a normal distribution, we generated 100 knowledge bases for each of the 1000 randomly generated 2-colour hand. The experiments are summarised in Table~\ref{tab:2}, where the meanings of the columns are the same as those in  Table~\ref{tab:1}.

\begin{table}
\begin{center}
\caption{\label{tab:2} Experimental results of  2-colour hands }
\scalebox{0.75}{
\begin{tabular}{|c| c| c| c| c| c| c| c} \hline
\begin{tabular}{c} 
num.\\ tiles\\ in KB
\end{tabular}
& 
\begin{tabular}{c} 
num.\\ (hand,KB)\\ pairs
\end{tabular}
& 
\begin{tabular}{c} 
quadtree\\ max. calc. \\
time (ms)
\end{tabular} 
& 
\begin{tabular}{c} 
block\\ max. calc. \\ time (ms)
\end{tabular} 
&
\begin{tabular}{c} 
quadtree\\ avg. calc. \\ time (ms)
\end{tabular} 
&
\begin{tabular}{c} 
block\\ avg. calc. \\ time (ms)
\end{tabular} 
& avg. ratio\\

\hline   [0, 10)  &  9963    & $424.5$ & $9.4$ & $15.1$ & $0.5$  & $28.3$ \\
\hline   [10, 20)  &  18798  & $1155.2$ & $18.0$ & $69.0$ & $0.9$  & $75.4$ \\
\hline   [20, 30)  &  25350  & $1633.0$ & $18.1$ & $156.1$ & $1.1$  & $145.6$  \\
\hline   [30, 40)  &  23646  & $2235.8$ & $18.5$ & $274.8$ & $1.2$  & $237.5$   \\
\hline   [40, 50)  &  15734  & $2525.1$ & $17.4$ & $381.8$ & $1.2$  & $311.7$   \\
\hline   [50, 58]  &  6509  & $2679.5$ & $17.3$ & $456.7$ & $1.2$  & $375.4$   \\
\hline [0,58]&   100,000    & $2679.5$ & $17.3$     & $208.8$ & $1.0$  & $200.4$  \\
\hline
\end{tabular}
}
\end{center}
\end{table}

\subsection{Hands with three colours}
When examining hands with three colours, all tiles in the knowledge base are concerned. As there are only 108 tiles, they are at most 94 tiles in the knowledge base. Following a normal distribution, we generated 100 knowledge bases for each of the 1000 randomly generated 3-colour hand. The experiments are summarised in Table~\ref{tab:3}, where the meanings of the columns are the same as those in  Table~\ref{tab:1}.

\begin{table}
\begin{center}
\caption{\label{tab:3} Experimental results of 3-colour hands }
\scalebox{0.75}{
\begin{tabular}{|c| c| c| c| c| c| c| c} \hline
\begin{tabular}{c} 
num.\\ tiles\\ in KB
\end{tabular}
& 
\begin{tabular}{c} 
num.\\ (hand,KB)\\ pairs
\end{tabular}
& 
\begin{tabular}{c} 
quadtree\\ max. calc. \\
time (ms)
\end{tabular} 
& 
\begin{tabular}{c} 
block\\ max. calc. \\ time (ms)
\end{tabular} 
&
\begin{tabular}{c} 
quadtree\\ avg. calc. \\ time (ms)
\end{tabular} 
&
\begin{tabular}{c} 
block\\ avg. calc. \\ time (ms)
\end{tabular} 
& avg. ratio\\

\hline   [0, 10)  &  4966    & $58.0$ & $3.7$ & $3.1$ & $0.2$  & $13.3$ \\
\hline   [10, 20)  &  8151  & $212.4$ & $3.4$ & $17.7$ & $0.3$  & $50.7$ \\
\hline   [20, 30)  &  11407  & $470.6$ & $4.0$ & $29.7$ & $0.4$  & $74.0$  \\
\hline   [30, 40)  &  14519  & $647.2$ & $6.7$ & $50.5$ & $0.4$  & $119.3$   \\
\hline   [40, 50)  &  15842  & $873.8$ & $4.4$ & $80.9$ & $0.4$  & $179.8$   \\
\hline   [50, 60)  &  15297  & $1054.8$ & $5.4$ & $110.8$ & $0.5$  & $242.1$   \\
\hline   [60, 70)  &  12611  & $1256.7$ & $6.4$ & $139.3$ & $0.5$  & $297.1$   \\
\hline   [70, 80)  &  9454  & $1325.8$ & $4.4$ & $165.8$ & $0.5$  & $349.6$   \\
\hline   [80, 94]  &  7753  & $1386.2$ & $4.3$ & $190.5$ & $0.5$  & $396.8$   \\
\hline [0,94]&   100,000    & $1386.2$ & $6.7$     & $90.1$ & $0.4$  & $209.6$  \\
\hline
\end{tabular}
}
\end{center}
\end{table}

\subsection{Summary of experiment results}
From our results we can see that the block deficiency algorithm is much more faster than the quadtree algorithm. In the worst cases, the quadtree algorithm takes 9.3 seconds to compute the deficiency, while the block deficiency algorithm takes at most 0.1 seconds. In average, the quadtree algorithm is 33x, 200x, and 210x slower than the block deficiency algorithm when the hands are pure, 2-colour, and 3-colour, respectively. While the quadtree algorithm is exact, it cannot be adopted by a Mahjong AI as in each step the Mahjong AI needs to respond within, say, 8 seconds while it often needs to call the quadtree algorithms dozens of times when selecting a tile to discard. Our experiments also confirm that the block deficiency algorithm is exact when the deficiency of the hand w.r.t. a knowledge base is less than or equal to 4, which guarantees its effectiveness as a deficiency algorithm. Moreover, the results show that almost all ($\geq 99\%)$ errors (i.e., the block deficiency is greater than the real deficiency) occur when the knowledge base contains less than or equal to 2 (out of 22), 10 (out of 58), and 20 (out of 94) tiles for pure hands, 2-colour hands, and 3-colour hands, respectively. In real games, this (error) rarely happens as when the knowledge base contains fewer tiles, the game is close to the end and the AI's hand should be close to ready (i.e., the deficiency should be close to 1). 

We also tested the two deficiency algorithms in complete games. Assume that the four players use the same set of Mahjong AIs and they play 100 randomly generated games in two tests. In the first test, all Mahjong AIs use the quadtree algorithm. In average, each game was completed in 219.4 seconds; in the second test, all Mahjong AIs use the block algorithm. In average, each game was completed in 1.0 seconds.  

\section{Conclusion}\label{sec:conclusion}
Computing how many tile changes are necessary to complete a Mahjong hand (i.e., the deficiency or shanten number) is a routine task for a Mahjong AI. There are very few published methods for this purpose and existing methods are either inefficient or not knowledge-aware. In this paper, we proposed an efficient algorithm for computing the deficiency of a Mahjong hand that respects the agent's knowledge of available tiles. The deficiency number calculated by this algorithm is always exact if the real deficiency is not larger than four. Experimental results on random hands and knowledge bases show that all errors occur when the real deficiency is larger than four and almost all ($\geq 99\%)$ errors occur when the knowledge base contains very few tiles, which rarely  occur simultaneously in practical games.  

The new deficiency algorithm can be used as a basic procedure by any Mahjong AI in all variants of Mahjong. We have incorporated this algorithm in designing Mahjong AI for Sichuan Mahjong, which in average can complete a game in  one second. Future work will exploit this advantage to generate huge game records and  design and train strong Mahjong AI.

\bibliographystyle{splncs04}
\small
\bibliography{mahjong}
\appendix
\section{Proofs}
\begin{proof}[Proof of Lemma~\ref{lem:incompletable}]
(a) If $p=re=ke=0$, $\pi$ has no pair, and we cannot make a pair by using tiles in the remainder of $\pi$ or the knowledge base. Thus $\pi$ is incompletable. 

(b) If $re=ke=0$, then the only way we can have the eye is to use one pair in $\pi$ to act as the eye. This is impossible if $p=0$. In case $p>0$, we need to create a new meld. However, by $rm=km=0$, this is not possible either. Thus $\pi$ is incompletable in this case.

(c) Suppose $m+n-e\leq 3$. In order to complete $\pi$, we need to create a new meld, but this is impossible if both $rm$ and $km$ are 0. 

(d) Suppose $m+n\leq 3$, $p=ke=km=0$ and $re=rm=1$. If we cannot make the eye and a meld simultaneously, i.e., $em=1$, $\pi$ is still incompletable. 
\end{proof}

\begin{proof}[Proof of Lemma~\ref{lem:m+n>=4}]
By Lemma~\ref{lem:7attr}, we know $m+n>4$ only if $p>0$. 

For case (a), we have $p>0$ and $e\leq 1$. If $e=1$, then all the other pairs in $\pi$ are completable; if $e=0$, we may use one pair in $\pi$ as the eye. In either subcase, we need only complete $4-m$ pmelds. 

For case (b), if $e=0$ and $re=1$, the eye can be created by using a tile in the remainder and a tile from the knowledge base. Thus the cost is $4-m+1$. If $p>0$ and $rm=1$, we assign an existing pair as the eye and create a meld by starting with a tile in the remainder. The cost is also $4-m+1$ as we need to complete $4-m-1$ pmelds and create a meld from scratch.

In case (c), since neither condition of case (b) holds, we cannot complete the hand by making the eye or an meld by starting with a tile in the remainder. If $e=re=0$ and $ke=1$, we can make the eye by borrowing two tiles from $KB$; if $p>0$, $rm=0$, and $km=1$, we use one pair as the eye and make a new meld by borrowing three tiles from $KB$. In either case, the cost is $4-m+2$. 

In case (d), none of the above holds. We have (i) $m+n=4$, (ii) if $e=0$ then $re=ke=0$, and (iii) if $p>0$ then $rm=km=0$. There are three subcases. If $p>e=0$, then $rm=km=re=ke=0$. By Lemma~\ref{lem:incompletable} (b), $\pi$ is incompletable. If $p=0$, then $e=0$ and hence $re=ke=0$. By Lemma~\ref{lem:incompletable} (a), $\pi$ is incompletable. If $p>0$ and $e>0$, then $rm=km=0$ and $m+n-e\leq 3$. By Lemma~\ref{lem:incompletable} (c), $\pi$ is incompletable.
\end{proof}

\begin{proof}[Proof of Lemma~\ref{lem:cost3}]
Except at most one pair, each pmeld in $\pi$ can be completed by one tile change. To complete $\pi$, we need also to create at least one meld. In addition, if $p=0$, we need also create the eye by using tiles in the remainder of $\pi$ and $KB$. In case $e=0$ but $p>0$, we may assign an existing pair in $\pi$ as the eye, but this incurs the cost of creating another meld from scratch.

Suppose we want to create a meld from scratch; that is, do not use any pmeld in $\pi$. We pick one tile from the remainder if $rm=1$ and then complete it by borrowing two tiles in $KB$. When $rm=0$, we need to create the meld by borrowing three tiles from $KB$. This shows that $mcost$ is the minimum number of tile changes for creating such a meld.

The case of creating the eye is only slightly different and depends on the values of $p,e,re$ and $ke$. If $re=1$ or $ke=1$, and we want to create the eye from scratch, then the cost is either 1 (if $re=1$) or 2 (if $re=0$ and $ke=1$). That is, the cost to make the eye in this case is $ecost$.

In case (a), as $e=1$, we already have the eye. The cost of $\pi$ is $n-e$ (the number of completable pmelds in $\pi$) plus $mcost$, the minimal cost of making a meld from scratch. 

In case (b), $p=e=0$. We have $re=1$ or $ke=1$. Thus the cost of $\pi$ is $n$ (the number of completable pmelds in $\pi$) plus $mcost$ plus $ecost$.

In case (c), $p=0$, $em=1$, and $ke=1$ or $km=1$. From $em=1$, we have $re=rm=1$ and $ecost=1$, $mcost=2$, but we cannot make the eye and a new meld simultaneously from the remainder. By $ke=1$ or $km=1$, we have the cost is  $n+mcost+ecost+1=n+4$. 

In case (d), $p>0$ and $e=0$. We may complete $\pi$ by (i) creating a new meld and the eye from scratch, or (ii) assigning an existing pair in $\pi$ as the eye and creating two new melds from scratch. We divide the discussion into subcases. As $p>0$, it is possible that both $re$ and $ke$ are 0. Here we also note that $n\geq p>0$.
\begin{itemize}
    \item If $re=1$, then $ecost=1$ and the cost of $\pi$ is $f_1 = n+mcost+1$, which is at least $4$ as $mcost\geq 2$. If the eye-meld conflict exists, the cost is at least 5. 
    
    \item If $re=ke=0$, then we can only make the eye by using an existing pair in $\pi$. Thus the cost of $\pi$ is at least $f_2 = n-1+2\times mcost\geq 4$, as $mcost\geq 2$. 
    \item Suppose $re=0$, $ke=1$ and  $rm=0$. Since $\max(rm,km)=1$, we have $km=1$ and, thus, $mcost=3$, $ecost=2$. Creating two melds from scratch requires at least 6 tile changes and thus it is cheaper to create the eye from scratch. In this case, the cost of $\pi$ is at least $n+mcost+ecost=n+5\geq 6$.
    \item Suppose $re=0$, $ke=1$ and $rm=1$. Then $mcost=2$ and $ecost=2$. If we choose to create the eye from scratch, then the cost is $n+mcost+ecost=n+4$. Instead, suppose we assign one existing pair as the eye and make two melds from scratch. This incurs cost at least $n-1+2\times mcost=n+3$. As a result, the cost of $\pi$ in this case is at least $n+3\geq 4$.
\end{itemize}
It is then clear that the cost $\pi$ in case (d) is at least the minimum of $f_1$ and $f_2$ and thus at least 4.
\end{proof}

\end{document}